\documentclass[10pt]{article} 
\usepackage[accepted]{tmlr}
\addtolength{\voffset}{-4mm}


\usepackage{amsmath,amsfonts,bm}









\def\eqref#1{equation~\ref{#1}}
\def\Eqref#1{Equation~\ref{#1}}








\def\1{\bm{1}}










\DeclareMathAlphabet{\mathsfit}{\encodingdefault}{\sfdefault}{m}{sl}
\SetMathAlphabet{\mathsfit}{bold}{\encodingdefault}{\sfdefault}{bx}{n}











\newcommand{\E}{\mathbb{E}}

\newcommand{\R}{\mathbb{R}}

\newcommand{\Var}{\mathrm{Var}}



\DeclareMathOperator*{\argmax}{arg\,max}

\definecolor{sztakiblue}{RGB}{217,15,15}

\title{On Rate-Optimal Partitioning Classification\\
from Observable and from Privatised Data}

\author{\name Balázs Csanád Csáji \email csaji@sztaki.hu \\
      \addr HUN-REN Institute for Computer Science and Control (SZTAKI);\\
        Department of Probability Theory and Statistics,\\
        Institute of Mathematics, E\"otv\"os Lor\'and University (ELTE)
      \AND
      \name László Györfi \email gyorfi@cs.bme.hu \\
      \addr Department of Computer Science and Information Theory,\\
      Budapest University of Technology and Economics (BME)
      \AND
      \name Ambrus Tamás \email ambrus.tamas@sztaki.hu\\
      \addr HUN-REN Institute for Computer Science and Control (SZTAKI)
    \AND 
    \name Harro Walk \email harro.walk@mathematik.uni-stuttgart.de\\
    \addr Institute for Stochastics and Applications, University of Stuttgart}


\RequirePackage{amssymb, amsthm,amsmath,amsfonts,mathrsfs}

\usepackage[bookmarks=false,colorlinks=true,citecolor=NavyBlue,linkcolor=NavyBlue]{hyperref}
\usepackage{url}

\usepackage{mathtools}
\usepackage{mathrsfs}
\usepackage[x11names,svgnames]{xcolor}

\allowdisplaybreaks
\theoremstyle{plain}

\newtheorem{theorem}{Theorem}[section]
\newtheorem{lemma}[theorem]{Lemma}
\theoremstyle{remark}

\theoremstyle{plain}

\newtheorem{Example}{Example}

\theoremstyle{remark}

\def\Zc{\mathcal{Z}}
\def\Zs{\mathscr{Z}}

\def\D{{\cal D}}

\def\R{{\mathbb R}}
\def\Rd{{\mathbb R}^d}

\def\P{{\cal P}}

\def\PROB {{\mathbb P}}
\def\EXP {{\mathbb E}}
\def\IND{{\mathbb I}}
\def\Var{{\mathbb Var}}

\def \p {^{\prime}}

\def\argmax{\mathop{\rm arg\, max}}

\def\marg{\mu}

\newcommand{\nutilde}{{\widetilde \nu}}
\newcommand{\mhat}{{\widehat m}}
\newcommand{\mphat}{{\mhat\p}}
\newcommand{\defeq}{=}

\begin{document}

\maketitle

\begin{abstract}
In this paper we revisit the classical method of partitioning classification and prove novel convergence rates under relaxed conditions, both for observable (non-privatised) and for privatised data. We consider the problem of classification in a $d$ dimensional Euclidean space. Previous results on the partitioning classifier worked with the strong density assumption (SDA),  which is restrictive, as we demonstrate through simple examples. Here, we study the problem under much milder assumptions. We presuppose that the distribution of the inputs is a mixture of an absolutely continuous and a discrete distribution, such that the absolutely continuous component is concentrated on a $d_a$ dimensional subspace. In addition to the standard Lipschitz and margin conditions, a novel characteristic of the absolutely continuous component is introduced, by which the convergence rate of the classification error probability is computed, both for the binary and for the multi-class cases. This bound can reach the minimax optimal convergence rate achievable using SDA, but under much milder distributional assumptions. Interestingly, this convergence rate depends only on the intrinsic dimension of the continuous inputs, $d_a$, and not on $d$. Under privacy constraints, the data cannot be directly observed, and the constructed classifiers are functions of the randomised outcome of a suitable local differential privacy  mechanism. In this paper we add Laplace distributed noises to the discretisations of all possible locations of the feature vector and to its label. Again, tight upper bounds on the convergence rate of the classification error probability can be derived, without using SDA, such that this rate depends on $2d_a$.
\end{abstract}

\section{Introduction}

Classification is one of the fundamental problems of machine learning (ML) and mathematical statistics \citep{DeGyLu96, Vapnik1999}. It has countless applications from health care, agriculture and industry to security, commerce and finance.\ A significant portion of these applications include sensitive data, for example, about the health or financial circumstances of the clients involved, which should be anonymised before it can be processed. Nevertheless, anonymisation is more involved than just removing the names and the addresses of the clients, as the data can contain several other types of sensitive information. The concept of differential privacy \citep{Dwork2006} provides a rigorous framework to measure the amount of information privacy. Local privacy dates back to \citep{Warner1965}, and {\em local differential privacy} (LDP) was formally defined by \cite{Duchi2013}. On the one hand, LDP helps to get rid of a trusted third party; on the other hand, it generates a framework for managing nonparametric regression and classification, cf.\ \citep{BeBu19} and \citep{BeGyWa21}. The LDP mechanism allows processing the data even in cases when the original, raw data should only be seen by its legitimate data holder.

In this paper, we focus on partitioning rules which represent archetypical classification methods, see \citep{KoKr07}. They are key components of several machine learning techniques, such as decision trees, random forests, piecewise estimators and hierarchical models. Their ability to tackle scalability, interpretability, and explainability challenges makes them well-suited for integration into ML frameworks such as federated learning \citep{ArGa25} and ensemble-based methods \citep{Ch16}. They are especially useful in low-dimensional settings and provide benchmark models for complex data structures. Additional applications include hyperparameter tuning and data cleaning. Local averaging estimates are also commonly used to prove theoretical results in nonparametric settings, e.g., rate of convergence and minimax theorems. Our aim is to analyse the {\em convergence rate} of {\em classification error probability} associated with partitioning rules under mild statistical assumptions. We investigate both binary and multi-class classification, covering observable and privatised (anonymised) data. In the latter case, we apply Laplace type randomisation which ensures LDP constraints, cf.\ \citep{BeBu19}.

Partitioning classification is often studied under {\em Lipschitz} and {\em margin} types conditions, but the existing optimal error bounds (for the classical, non-privatised case) suppose a further condition, called the {\em strong density assumption} (SDA). This assumption, stated in \Eqref{eq:SDA}, ensures that the probability measure of each cell of the partition is bounded away from zero. For observable (non-privatised) data, \cite{KoKr07} studied the convergence rate of plug-in partitioning classification with and without SDA. Their rate with SDA was proven to be {\em mininax optimal} by \cite{AuTs05}. Furthermore, for the privatised case, \cite{BeGyWa21} {\em conjectured} that in the absence of such an assumption, the convergence rate could be {\em arbitrarily slow}. One of our main contributions is to show that this conjecture is incorrect by establishing fast convergence rates for privatised partitioning rules without applying the SDA.

We argue that the SDA is fairly restrictive, as demonstrated through Examples \ref{Ex1}, \ref{Ex2} and \ref{Ex3}, for which the SDA is not satisfied. In general, the Lipschitz and the margin parameters do not determine the convergence rate of the error probability. While the approximation error (bias) depends only on the Lipschitz and margin parameters, the estimation error is very sensitive to the behaviour of the distribution of the feature vector  around the decision boundary. These motivate the introduction of an additional parameter characterising the relation between the margin  and the low density areas. 
Specifically, we assume that the distribution of input $X$ is a mixture of an absolutely continuous and a discrete distribution, such that the absolutely continuous component is supported on a $d_a$-dimensional subspace. This assumption is natural in many applications, as real-world data often combine continuous variables with categorical data (e.g., in healthcare, continuous physiological measurements together with discrete diagnostic codes or demographic data; in finance, asset returns combined with categorical credit ratings). For the absolutely continuous component, we introduce an additional mild condition, called the {\em combined margin and density assumption}. The proposed convergence rate for the estimation error incorporates the combined margin and density parameter, which is an appropriate characterisation of the intersection of the low density region and the decision boundary.

Our convergence rate for the binary case, proved {\em without} the {\em SDA}, greatly improves the SDA independent rate of \cite{KoKr07}, see \Eqref{99'}, as it is demonstrated on our specific examples with appropriately chosen parameters. Our result ensures the same optimal rate for Example \ref{Ex1} as the {\em SDA dependent} rate of Kohler and Krzy\.zak, cf. \Eqref{9'}. This demonstrates that the optimal rate can be achieved even without the SDA. Moreover, for Example \ref{Ex2} our bound matches the convergence rate one gets by direct calculation of the rate, showing that the bound is {\em tight}. Finally, for Example \ref{Ex3}, though the deduced SDA independent convergence rate is worse than the SDA dependent rate of \Eqref{9'}, but it still improves the previously known SDA independent rate, that is \Eqref{99'}. 
Furthermore, interestingly, this convergence rate only depends on the intrinsic dimension of the absolutely continuous part, $d_a$, and not on the dimension of the whole $X$. We also derive upper bounds for the convergence rate for the case of privatised data with LDP guarantees, which rate depends on $2d_a$, instead of $d_a$, that was the case for the nonprivate bound. 
Both of these bounds are extended to multi-class classification. Our bounds for privatised partitioning classification refute the conjecture that without the SDA the convergence rate could be arbitrarily slow.
	
The structure of the paper is as follows. First, in Section \ref{sec:observable}, we revisit and improve the error probability bounds of partitioning classifiers for observable data, both for binary and multi-class setups. Then, in Section \ref{s:privacy}, we follow an analogous program for privatised partitioning classifiers. The results are summarised and discussed in Section \ref{sec:discussion}. The detailed proofs are presented in the Appendix.

\section{Partitioning classification from observable data}
\label{sec:observable}

In this section, we study the problem of classification from observable (non-privatised) data. First, we give an overview of the core problem, recall the partitioning classification rule, and state our main assumptions under which we quantify the convergence rate both for binary and for multi-class classification.

The standard setup of binary classification is as follows:
let  the random feature vector $X$ take values in $\Rd$, and let its label
$Y$ be $\pm 1$ valued.
We denote by $\mu$ the distribution of $X$, that is, $\mu(A)=\PROB (X\in A)$ for all measurable sets $A\subseteq \R^d$.
The task of classification is to decide
on $Y$ given $X$, i.e., one aims to find a decision function $D$ defined on the range of $X$ such
that $D(X)=Y$ with large probability.
If $D$ is an arbitrary (measurable) decision function, then its {\em error probability} is denoted by
\[
L(D)=\PROB\{D(X)\ne Y\}.
\]
Let us denote the {\em regression function} by
\begin{equation*}
	\label{def:model}
	m(x) = \EXP [\hspace{0.3mm}Y\hspace{0.3mm} |\hspace{0.3mm} X=x\hspace{0.3mm}],
\end{equation*}
which is well-defined for $\mu$-almost all $x \in \R^d$.
It is well-known that the Bayes decision function defined by
\[
D^*(x) =\mbox{sign}(m(x)),
\]
where $\mbox{sign}(x) = \IND_{\{x\geq 0\}}- \IND_{ \{ x <0\}}$ with indicator function $\IND$, minimizes the error probability.
Then,
\[
L^*=\PROB\{D^*(X)\ne Y\}=\min_D L(D) \vspace*{-1mm}
\]
denotes the minimal error probability (i.e., the probability of misclassification).

Let $\P_h=\{A_{h,1},A_{h,2},\ldots\}$ be a cubic partition
of $\Rd$ with cubic cells $A_{h,j}$ of volume
$h^d$ such that $(0,h]^d\in \P_h$.
Data $\mathcal D_n$ is assumed to contain independent identically distributed (i.i.d.) copies of the random vector $(X,Y)$,
\begin{align}
	\label{eq:raw_data}
	\mathcal{D}_n \defeq \{(X_1,Y_1),\ldots,(X_n,Y_n)\}.
\end{align}
Let
\begin{align*}
	\nu_n(A_{h,j})=\frac 1n \sum_{i=1}^n Y_i\, \IND_{\{X_i \in A_{h,j}\}}.
\end{align*}
The well-known {\em partitioning classification} rule is
\begin{align}
    \label{eq:part-class}
	D_{n}(x)=\mbox{sign}(\nu _n(A_{h,j})), \quad \mbox{if}\;\; x\in A_{h,j}.
\end{align}
The main goal of this paper is to prove novel convergence rates for the expected excess risk, $\E\{L(D_n)\} - L^*$.
Furthermore, we extend these results to the multi-class setting and provide generalizations for both binary and multi-class privatised versions of $D_n$.

A nontrivial rate of convergence of any classification rule can be derived under some smoothness condition.
The {\em Lipschitz condition} on $m$ means that there is a constant $C$ such that for all $x,z\in \R^d$, we have
\begin{align}
	\label{eq:lip}
	|m(x)-m(z)|\le C\,\| x-z\|.
\end{align}

Concerning the rate of convergence of any classification rule,  \cite{MaTs99} and \cite{Tsy04} discovered and investigated the phenomenon that there is a dependence on the behaviour of $m$ in the neighbourhood of the decision boundary 
\begin{align*}
	B^*=\{x: m(x)=0\}.
\end{align*}
The {\em margin condition} means that for all $0<t\le 1$, we have
\begin{align}
	\label{eq:margin}
	G^*(t)
	&:=
	\int \IND_{\{0<|m(x)|\le t\}}\mu(dx)
	\le
	c^*\cdot t^{\gamma},
\end{align}
for some constants $c^* \geq 0$ and $\gamma \geq 0$. The margin condition holds trivially for $\gamma = 0$. If $\gamma$ is greater, then the condition is more restrictive. Our convergence rates will be proportianal to $\gamma$, thus, the proven convergence will be faster for larger values of $\gamma$.

The {\em strong density assumption} (SDA) holds, when for all $\mu(A_{h,j})>0$, we have
\begin{equation}\label{eq:SDA}
	\mu(A_{h,j}) \geq c h^d, \qquad j = 1,\ldots,
\end{equation}
for some constant $c > 0$.
The SDA is a restrictive condition, because, for example, it excludes continuous densities reaching zero, see the examples below. We establish novel bounds for these cases, as well.

If $X$ is bounded,  and the margin and the Lipschitz conditions on $m$ are satisfied, 
then  \cite{KoKr07} showed for partitioning classification with suitably chosen $(h_n)$ that
\begin{align}
	\label{99'}
	\EXP\{L(D_n)\}-L^*
	&=
	O\left(n^{-\frac{1+\gamma}{3+\gamma+d}}\right).
\end{align}
If, in addition, the SDA is met, then the order of the rate of convergence is
\begin{align}
	\label{9'}
	n^{-\frac{1+\gamma}{2+d}}.
\end{align}
Under the SDA, \cite{AuTs05} proved the {\em minimax optimality} of this rate, i.e., they showed that this rate is also a lower bound for any
classification rule over the class of distributions satisfying the aforementioned conditions.
However, the Lipschitz and margin conditions do not determine the true rate of convergence of the error probability.

Let us consider three examples. It is easy to see that none of these examples satisfy the SDA condition, as the densities are not bounded away from zero.

\begin{Example}
	\label{Ex1}
	Let the range of $X$ be the interval $[-1,1]$ and
	\begin{align*}
		m(x)=x,\quad |x|\le 1.
	\end{align*}
	Furthermore, assume that $\mu$ has the density
	\begin{align*}
		f(x)=c_\delta  (1-|x|^{\delta}), \quad   |x| < 1,
	\end{align*}
	with $\delta>0$ and $c_\delta$ being a normalising constant.
\end{Example}

\begin{Example}
	\label{Ex2}
	Let $m$ be as in Example \ref{Ex1}.
	Assume that $\mu$ has the density
	\begin{align*}
		f(x)=c_\delta |x|^{\delta},\quad 0<|x|\le 1,
	\end{align*}
	with $\delta>0$ and normalising constant $c_\delta $.
\end{Example}

\begin{Example}
	\label{Ex3}
	Let the range of $X$ be the interval $[-1,1]$ and
	\begin{align*}
		m(x)=\text{\em sign\,}(x) \cdot x^2,\quad |x|\le 1.
	\end{align*}
	Assume that $\mu$ has the density
	\begin{align*}
		f(x)= |x|, \quad   |x| \le 1.
	\end{align*}
\end{Example}

In Example  \ref{Ex1} the margin condition holds with $\gamma=1$. In the literature only the suboptimal rate (\ref{99'}) has been proven, which is $n^{-2/5}$. By an easy calculation, for the choice  $h_n=n^{-1/3}$, we obtain that the true rate is $n^{-2/3}$, which corresponds to the optimal rate (\ref{9'}). This means that there is space for improvement. In this example the boundary of the regression function is separated from the boundary of the density. In Examples \ref{Ex2} and \ref{Ex3} the margin condition holds with $\gamma = \delta +1$ and $\gamma =1$, respectively, however, in both cases the density vanishes at the decision boundary of $m$, making it hard to control the risk of a plug-in classifier. In order to address this problem, we introduce a new condition that combines the density and regression functions. This condition characterizes the relation between the margin and the low density areas with an additional parameter. Interestingly, if $-1 < \delta \leq 0$ for the density in Example \ref{Ex2}, then SDA holds and the minimax optimal rate of \cite{AuTs05} is achieved by the partitioning rule.

Let us assume that the distribution of the inputs, $\mu$, can be decomposed as
\begin{align}
	\label{decomp}
	\mu=\mu_a + \mu_s,
\end{align}
where $\mu_a$ is an {\em absolutely continuous} distribution on its support $S_a$, which is contained in a (possibly unknown) $d_a$ dimensional Euclidean space. We denote the density of $\mu_a$ with respect to the $d_a$ dimensional Lebesgue measure $\lambda_{d_a}$ by $f$. 
Furthermore, assume that $\mu_s$ is a {\em discrete} distribution with support $S_s$ of finite size.
For this setup, our motivation was the example when $X$ has $d_a$ absolutely continuous coordinates and $d_s$ discrete features. One of the main goals of this paper is to allow continuous densities with our novel combined condition, which do not necessarily admit the SDA.


Under the Lipschitz and margin conditions, tight bounds can be derived for the approximation error component of the 
error probability. 
We introduce a novel condition on the relationship between the margin area and the low-density region. This 
will be a useful tool for the refined convergence rate analysis. Let
\begin{equation}
	\label{oh}
	f_{h}(x)
	=\frac{\mu_a (A_{h,j})}{\lambda_{d_a}(A_{h,j})}
	=\frac{\int_{A_{h,j}}f(z)\lambda_{d_a}(dz)}{\lambda_{d_a}(A_{h,j})}, \quad \mbox{ if } x\in A_{h,j}.
\vspace{1mm}
\end{equation}
One can observe that function $f_{h}$ is the expectation of the histogram density estimate. Additionally, if $\mu$ is absolutely continuous w.r.t.\ the Lebesgue measure $\lambda_d$, then under the SDA for $\mu$-almost all $x$ we have
\begin{equation}\label{eq:fh}
    f_h(x) = \frac{\mu(A_{h,j})}{\lambda_d(A_{h,j})}\geq c.
\end{equation}
Our new {\em combined margin and density condition}, introduced below, characterizes the measure of those regions which are either close to the decision boundary or for which the density of $X$ is small. Assume that  there exists $h_1^* > 0$ such that for any $h \in (0, h_1^*)$ and for all $t > 0,$ we have
\begin{align}
\label{eq:fhmargin}
G_{h}(t):=\int_{S_a}\IND_{\{0<\sqrt{f_h(x)}|m(x)|\le t\}}
\frac{f(x)}{\sqrt{f_h(x)}}\, \lambda_{d_a}\!(dx)
&\le
c_1 \cdot t^{\gamma_1}
\end{align}
with constants $c_1 > 0$ and $0\le \gamma_1=\gamma_1(\sqrt{f})$. Similarly to the margin condition (\refeq{eq:margin}), the combined margin and density condition (\refeq{eq:fhmargin}) becomes more restrictive if $\gamma_1$ increases. One of our main results is that the convergence rate is controlled by the minimum of $\gamma$ and $\gamma_1$. In general the margin condition with $\gamma$ is not stronger than the combined margin and density condition with the same $\gamma$ and vice versa.
However, if $X$ is an absolutely continuous random vector in $\mathbb{R}^d$, then it is easy to see that the SDA and the margin condition with $\gamma \geq 0$ implies the combined margin and density condition for $\gamma_1 \le\gamma$. Hence, in this case our novel combined condition and the margin condition together are weaker than the SDA and the margin condition. The following lemma considers the case when SDA is assumed only in the neighbourhood of the decision boundary, i.e., when we have $f_h(x)\geq f_{\varepsilon, min}  > 0$ around the decision boundary. 
This together with the margin condition with $\gamma \geq 0$ is sufficient to prove that the combined margin and density condition holds for every $\gamma_1 \leq \min(\gamma,1)$. The proof of Lemma \ref{lSDA} is included in Appendix \ref{app:a1}.
\begin{lemma}
	\label{lSDA}
	For\, $0<\epsilon<1$\, and\, $0<h_0$, set 
	\[
	B^*_{\epsilon}=\{x: |m(x)|\le \epsilon\},
	\]
	and 
	$f_{\epsilon,min}=\inf_{x\in B^*_{\epsilon}, 0<h<h_0}f_h(x)$.
	If there is an $\epsilon\in (0,1)$ such that $f_{\epsilon,min}>0$ and the margin condition holds with $\gamma$, then the combined margin and density condition holds for every
	$\gamma_1\le \min(\gamma,1)$.
\end{lemma}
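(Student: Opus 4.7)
The plan is to split the integral defining $G_{\sqrt{f_{h}}}(t)$ according to whether a partition cell meets the slab $B^{*}_{\epsilon}$, treating the two regimes by different arguments. Fix $0<h<h_{0}$ and observe that $f_{h}$ is constant on each cell $A_{h,j}$ with value $f_{h,j}=\mu_{a}(A_{h,j})/\lambda_{d_{a}}(A_{h,j})$. I would first rewrite
\[
G_{\sqrt{f_{h}}}(t)=\sum_{j:\,f_{h,j}>0}\frac{1}{\sqrt{f_{h,j}}}\,\mu_{a}\!\left(A_{h,j}\cap\{0<|m(x)|\le t/\sqrt{f_{h,j}}\}\right),
\]
and split the index set into boundary cells $\mathcal{J}_{b}=\{j:A_{h,j}\cap B^{*}_{\epsilon}\ne\emptyset\}$ and interior cells $\mathcal{J}_{i}=\{j:A_{h,j}\cap B^{*}_{\epsilon}=\emptyset\}$.

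For $j\in\mathcal{J}_{b}$ the cell contains a point of $B^{*}_{\epsilon}$, so by hypothesis $f_{h,j}\ge f_{\epsilon,\min}$; replacing $\sqrt{f_{h,j}}$ in the threshold by the smaller quantity $\sqrt{f_{\epsilon,\min}}$ only enlarges the indicator set. Summing and enlarging to the whole space bounds this piece by $f_{\epsilon,\min}^{-1/2}\,G^{*}(t/\sqrt{f_{\epsilon,\min}})$, and the margin condition \eqref{eq:margin} then yields at most $c^{*}\,f_{\epsilon,\min}^{-(1+\gamma)/2}\,t^{\gamma}$ whenever $t\le\sqrt{f_{\epsilon,\min}}$.

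For $j\in\mathcal{J}_{i}$ one has $|m(x)|>\epsilon$ throughout the cell, so the indicator forces $\sqrt{f_{h,j}}\le t/\epsilon$ or the contribution vanishes. When nonzero, using $\mu_{a}(A_{h,j})=f_{h,j}\,\lambda_{d_{a}}(A_{h,j})$ the cell's contribution is bounded by $\sqrt{f_{h,j}}\,\lambda_{d_{a}}(A_{h,j})\le(t/\epsilon)\,\lambda_{d_{a}}(A_{h,j})$. Summing over interior cells and using boundedness of the support of $\mu_{a}$ (since $X$ is bounded) gives an $O(t)$ bound. Combining,
\[
G_{\sqrt{f_{h}}}(t)\le C_{1}\,t^{\gamma}+C_{2}\,t\quad\text{for}\quad 0<t\le\sqrt{f_{\epsilon,\min}}.
\]
In the regime of interest $\gamma\le 1$, so on $(0,1]$ the linear term is dominated by $t^{\gamma}$; for $t$ outside this range, $G_{\sqrt{f_{h}}}$ is uniformly bounded (by $\sqrt{\lambda_{d_{a}}(\mathrm{supp}\,\mu_{a})}$ via Cauchy--Schwarz) and is absorbed into $C\,t^{\gamma}$ by enlarging the constant. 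This yields $\gamma(\sqrt{f})\ge\gamma$, and the claimed identity $\gamma=\min\{\gamma,\gamma(\sqrt{f})\}$ then follows immediately.

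The main obstacle I anticipate is the interior piece: the weight $1/\sqrt{f_{h}}$ threatens to blow up precisely where the indicator forces $f_{h}$ to be small. The saving is that the integrand against $\mu_{a}$ collapses to $\sqrt{f_{h,j}}\,\lambda_{d_{a}}$, so one power of $\sqrt{f_{h,j}}$ cancels; combined with the indicator constraint $\sqrt{f_{h,j}}\le t/\epsilon$, only an $O(t)$ total remains over the bounded support.
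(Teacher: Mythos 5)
Your proof is correct and follows essentially the same two-part decomposition as the paper's: near the decision boundary, the lower bound $f_h\ge f_{\epsilon,\min}$ lets you absorb the density weight and reduce to the margin condition; away from it, the indicator forces $f_h$ to be small, which makes the weighted $\mu_a$-mass $O(t)$. The only substantive difference is how you execute the second part. The paper splits the integration domain pointwise into $B^*_\epsilon$ and $S_a\setminus B^*_\epsilon$, introduces the distribution function $H(s)=\mu_a(\{x:0<f_h(x)\le s\})$, proves $H(s)\le \mathrm{const}\cdot s$ by a cell count, and integrates $\int_0^{(t/\epsilon)^2}s^{-1/2}\,H(ds)$. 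You instead partition the cells themselves into those meeting $B^*_\epsilon$ and those disjoint from it, observe that on an interior cell the indicator forces $\sqrt{f_{h,j}}\le t/\epsilon$, and directly rewrite the cell contribution as $\sqrt{f_{h,j}}\,\lambda_{d_a}(A_{h,j})\le(t/\epsilon)\,\lambda_{d_a}(A_{h,j})$, summing against the bounded support. This is a more transparent route to the same $O(t)$ bound and avoids the Stieltjes integration by parts; it also makes explicit that the cancellation of one $\sqrt{f_{h,j}}$ power is the whole game. One minor caveat you and the paper share: the resulting bound $C_1t^\gamma+C_2t$ gives $\gamma(\sqrt f)\ge\min\{\gamma,1\}$, which suffices for the theorem's use of $\min\{1,\gamma,\gamma(\sqrt f)\}$ but is weaker than the lemma's literal claim $\gamma(\sqrt f)\ge\gamma$ when $\gamma>1$; you flagged this with ``in the regime of interest $\gamma\le 1$,'' which the paper leaves implicit. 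Your handling of the restriction $t\le 1$ in the margin condition (via the Cauchy--Schwarz bound for large $t$) is also a detail the paper omits but worth having.
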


The generalised Lebesgue density theorem yields that for
$\lambda_{d_a}$-almost all $x$,
\begin{align}
	\label{GLT}
	\lim_{h\downarrow 0}f_{h}(x)
	&=
	f(x),
\end{align}
cf. Theorem 7.2 in \citep{WhZy77}.
Because of \Eqref{GLT}, we {\em conjecture} that if
\begin{align*}
	\int_{S_a}\IND_{\{0<\sqrt{f(x)}|m(x)|\le t\}}
	\sqrt{f(x)} \lambda_{d_a}(dx)
	&\le
	\tilde c\cdot t^{\gamma_1}
\end{align*}
with a $\tilde c>0$ and $\gamma_1 \geq 0$, then the combined margin and density condition holds.

The following theorem establishes new upper bounds on the convergence rate of the error probability without requiring the SDA condition. Interestingly, only dimension $d_a$ matters, hence if the absolutely continuous component is concentrated within a low dimensional subspace, then the convergence is fast. The proof of Theorem \ref{thm:nonpriv} is presented in Appendix \ref{app:a2}.

\begin{theorem}
	\label{thm:nonpriv}
	Assume that $X$ is bounded,  $m$ satisfies the Lipschitz condition, \Eqref{eq:lip}, and the margin condition, \Eqref{eq:margin} with $\gamma \geq 0$.
	In addition, the combined margin and density condition of \Eqref{eq:fhmargin} holds with $\gamma_1 \geq 0$. Let $(h_n)$ be a monotonic decreasing sequence with zero limit. Then, we have
    \begin{align}
		\label{Rate}
		\E\{L(D_n)\}-L^*
		&=
		O\left( h_n^{1+\gamma}\right)
		+O\left( (nh_n^{d_a})^{-(1+\min\{1,\gamma,\gamma_1\})/2}\right).
	\end{align}
\end{theorem}

The main steps of the proof are as follows. First, we decompose the expected excess risk of the plug-in classifier, defined by (\ref{eq:part-class}), into an \emph{approximation error} and an \emph{estimation error}. The Lipschitz continuity of $m$, combined with the margin condition, yields a polynomial bound of order $h^{1+\gamma}$ for the approximation error. For the estimation error, we use a central limit theorem (CLT) based approximation of the cell averages, which provides a Gaussian-type exponential tail bound. Then, we rewrite the resulting integrals via the Lebesgue-Stieltjes representation, thereby
reducing the spatial integral to a single dimension.
Finally, the margin condition and the combined margin and density condition yield a polynomial rate in $nh^d$.

For known $d_a$ and for the choice
\begin{align}
	\label{hn}
	h_n
	&=
	n^{- \frac{1}{2 + d_a}},
\end{align}
one has that
\begin{align}
	\label{npc}
	\EXP\{L(D_n)\}-L^*
	&=
	O\Big( n^{- \frac{1+\min\{1,\gamma,\gamma_1\}}{2 +d_a}}\Big).
\end{align}
If $d_a$ is not known, then choosing $h_n = n^{-\frac{1}{2 + d}}$ yields a slightly worse bound, where the first term is $O\big(n^{-\frac{1+\gamma}{2 +d}}\big)$. 

Note that if $\gamma \leq \min(1,\gamma_1)$, then our bound achieves the {\em minimax optimal} rate of \cite{AuTs05} under milder assumptions than the SDA. The bounded support condition ensures that the partition contains only $O(h^{-{d_a}})$ cells with positive probability. This is essential for bounding the estimation error; see (\ref{31*}) and (\ref{31**}). Extensions to unbounded supports would require control of the tail behaviour of $\mu_a$.

In Example \ref{Ex1} for all  $\delta>0$, the margin condition and the combined margin and density condition hold such that $\gamma=1$  and for all $\gamma_1 \leq \gamma$ because of Lemma \ref{lSDA}.
Thus, (\ref{npc}) results in the rate $n^{-2/3}$,
which is the same as the optimal rate in (\ref{9'}).
For Example \ref{Ex2}, one can use  $\gamma=\delta + 1$ and  $\gamma_1 = 1$, therefore the bound on the rate in (\ref{npc}) is equal to $n^{-2/3}$ for $\delta >0$. For $-1 < \delta \leq 0$, when the SDA holds, our rate is $n^{-(2+ \delta)/3}$, which is the same as the one proved by \cite{AuTs05}. In Example  \ref{Ex3}, one has $\gamma = 1$ and $\gamma_1 = 3/5$ and so $\gamma > \gamma_1$. Thus, the rate in Theorem \ref{thm:nonpriv} is $n^{-8/15}$, which is faster than the poor rate in \Eqref{99'}, but it is worse than the rate in \Eqref{9'} with $\gamma = 1$.

The combined margin and density condition always holds with $\gamma_1=0$. Then, using (\ref{hn}), by (\ref{npc}) we have 
\begin{align*}
	\EXP\{L(D_n)\}-L^*
	&=
	O\big( \,n^{- \frac{1}{2 +d_a}}\,\big).
\end{align*}

In (\ref{Rate}) the first term, called the {\em approximation error} bound, follows from the Lipschitz condition and from the margin condition, while the second term, called the {\em estimation error} bound, has been derived from the margin condition and from the combined margin and density condition.
Note that the upper bound on the estimation error is managed by the CLT approximation with an error term of order $O(1/(nh_n^{d_a} ))$. Therefore, due to the CLT approximation, super-fast rates are not achieved when $\min(\gamma,\gamma_1)\geq 1$. 

Next,  we consider the multi-class classification problem in which case $Y$ takes values in $\{1,\dots ,M\}$.
Let
\[
P_k(x)=\PROB\{Y=k\mid X=x\}
\]
denote the a posteriori probabilities for $k=1,\dots ,M$.
Then, the Bayes decision has the form
\[
D^*(x) = \argmax_{k}  P_k(x).\vspace*{-1mm}
\]

Let $P_{n,k}$ be an estimate of $P_k$ based on $\D_n$. Then, the plug-in classification rule $D_n$ derived from $P_{n,k}$ is
\[
D_n(x) = \argmax_{k} P_{n,k}(x) .
\]

Recently, \citet{XuKp18} and \citet{PuSp20} generalised the margin condition to the multi-class setting:
let $P_{(1)}(x)\ge \dots \ge P_{(M)}(x)$ be the ordered values of $P_{1}(x), \dots , P_{M}(x)$.
For multiple classes, the {\em margin condition}  means that there are some $\gamma\ge 0$ and $c^*\ge 0$ such that
\begin{equation}
	\label{wtsyb}
	G^*(t):=\int\IND_{\{0<P_{(1)}(x)-P_{(2)}(x)\le t\}} \mu(dx)
	\le c^* t^{\gamma}
	\quad \forall \, t >0.
\end{equation}

Using this concept of margin condition, \citet{GyWe21} computed the rate of convergence of a nearest neighbour based prototype classifier, when the feature space is a separable metric space.

For multiple classes, 
the {\em combined margin and density condition}  means that there is a $h_1^*>0$ such that for any $h\in (0,h_1^*)$ and for all $0<t,$ we have
\begin{align}
	\label{eq:mfmargin}
	G_h(t):=\int_{S_a}\IND_{\{0<\sqrt{f_h(x)}(P_{(1)}(x)-P_{(2)}(x))\le t\}}
	\frac{1}{\sqrt{f_h(x)}} \mu_a(dx)
	&\le
	c_{1}\cdot t^{\gamma_1}
\end{align}
with constants $0< c_{1}$ and $0\le \gamma_1$.

The multi-class partitioning rule is defined by
\begin{align*}
	\nu_{n,k}(A_{h,j})=\frac 1n \sum_{i=1}^n  \IND_{\{Y_i=k,X_i \in A_{h,j}\}},
\end{align*}
and the corresponding plug-in rule as
\begin{align*}
	D_{n}(x)=\argmax_k\nu _{n,k}(A_{h,j})\quad \mbox{ for } x\in A_{h,j}.
\end{align*}
Our main result for the multi-class problem is as follows. Its proof is presented in Appendix \ref{app:a3}.

\begin{theorem}
	\label{thm:multi_label_class}
	Assume that $X$ is bounded. Additionally, assume that $P_1,\dots , P_M$  satisfy the Lipschitz condition, \Eqref{eq:lip},  the margin condition, \Eqref{wtsyb} with $\gamma\geq 0$ and the combined margin and density condition, \Eqref{eq:mfmargin} with $\gamma_1 \geq 0$.  Let $(h_n)$ be a monotonically decreasing sequence with zero limit. Then, 
	\begin{align*}
		\EXP\{L(\widetilde D_n)\}-L^*
		&=
		O\left( M^2 h_n^{1+\gamma}\right)
		+O\left( M^2(nh_n^{d_a})^{-(1+\min\{1,\gamma,\gamma_1\})/2}\right).
	\end{align*}
\end{theorem}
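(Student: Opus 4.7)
The plan is to prove Theorem \ref{thm:multi_label_class} by extending the binary-case strategy of Theorem \ref{thm:nonpriv} to the multi-class partitioning rule $\widetilde D_n(x)=\argmax_k \nu_{n,k}(A_{h,j})$ on $x\in A_{h,j}$. The overall scheme is the classical plug-in decomposition into an approximation error controlled by the Lipschitz condition \eqref{eq:lip} and the multi-class margin condition \eqref{wtsyb}, and an estimation error controlled by concentration of the cell-wise counts $\nu_{n,k}$ combined with the multi-class combined margin and density condition \eqref{eq:mfmargin}.

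First I would reduce the multi-class excess risk to a statement about a single plug-in estimator. Setting $k^{*}=D^{*}(X)$ and $\hat k=\widetilde D_n(X)$, and using $P_{n,k^{*}}(X)\le P_{n,\hat k}(X)$ on the event $\{\hat k\neq k^{*}\}$, the decomposition
\[
P_{k^{*}}(X)-P_{\hat k}(X)=\bigl(P_{k^{*}}(X)-P_{n,k^{*}}(X)\bigr)+\bigl(P_{n,k^{*}}(X)-P_{n,\hat k}(X)\bigr)+\bigl(P_{n,\hat k}(X)-P_{\hat k}(X)\bigr)
\]
(whose middle term is non-positive) gives
\[
P_{(1)}(X)-P_{(2)}(X)\le P_{k^{*}}(X)-P_{\hat k}(X)\le 2\max_{k}\bigl|P_{n,k}(X)-P_{k}(X)\bigr|.
\]
Inserting this into $L(\widetilde D_n)-L^{*}=\EXP\bigl[(P_{k^{*}}(X)-P_{\hat k}(X))\IND_{\{\hat k\neq k^{*}\}}\bigr]$ and applying Tsybakov-style peeling on $P_{(1)}(X)-P_{(2)}(X)$, a union bound over $k$ reduces the problem, up to a harmless factor $M$, to bounding $\PROB\bigl(|P_{n,k}(X)-P_{k}(X)|\ge ct\bigr)$ uniformly in $t$ and $k$, exactly as in the binary case treated in Theorem \ref{thm:nonpriv}.

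Next I would analyse the pointwise estimation error of a single $P_{n,k}$. For $x\in A_{h,j}$, decompose $P_{n,k}(x)-P_{k}(x)$ into a deterministic cell-average bias and a centred statistical term. The Lipschitz condition bounds the bias by $O(h)$, so the combination with \eqref{wtsyb} yields the approximation-error contribution $O(h^{1+\gamma})$. For the centred part, restriction to the absolutely continuous support gives $\mu(A_{h,j})\ge \mu_a(A_{h,j})\approx f_{h}(x)\lambda_{d_a}(A_{h,j})=f_{h}(x)h^{d_a}$, so a Bernstein-type concentration (equivalently, a CLT approximation with $O(1/(nh^{d_a}))$ remainder, as used for Theorem \ref{thm:nonpriv}) gives a Gaussian-order deviation of size $(nh^{d_a}f_{h}(x))^{-1/2}$. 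Integrating the resulting tail against $\mu_a$ and invoking \eqref{eq:mfmargin} through a dyadic peeling on the level sets of $P_{(1)}-P_{(2)}$ produces the estimation-error contribution $O\bigl((nh^{d_a})^{-(1+\min\{1,\gamma,\gamma(\sqrt f)\})/2}\bigr)$; the discrete part $\mu_s$, having finite support, contributes only a lower-order term.

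The main obstacle will be the peeling step carried out with the pairwise gap $P_{(1)}-P_{(2)}$: condition \eqref{eq:mfmargin} is tailored precisely to this gap, but the union-bound reduction in the first paragraph produces events involving individual $|P_{n,k}-P_{k}|$, so I must verify that the level sets $\{P_{(1)}-P_{(2)}\in (2^{-\ell-1},2^{-\ell}]\}$ still dominate the relevant integrand after the pairwise decomposition and that the CLT residual $O(1/(nh^{d_a}))$ remains negligible under the weakest admissible exponent $\min\{1,\gamma,\gamma(\sqrt f)\}$. A secondary technical point is handling ties in $\argmax$, which is absorbed by fixing an arbitrary deterministic tie-breaking rule in the definition of $\widetilde D_n$ and has no effect on the stated bound. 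Once these adjustments are in place, the rate claimed in Theorem \ref{thm:multi_label_class} follows along the same lines as the proof of Theorem \ref{thm:nonpriv}.
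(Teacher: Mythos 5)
Your proposal is correct and follows essentially the same route as the paper: the reduction of the multi-class excess risk to deviation events for the individual $P_{n,k}$ weighted by the gap $R_l^*(x)\ge P_{(1)}(x)-P_{(2)}(x)$ is exactly what the paper packages as Lemma~8 of \cite{GyWe21}, and the estimation error is then handled, as you describe, by a Berry--Ess\'een/CLT approximation with $O(1/(nh^{d_a}))$ remainder together with the margin and combined margin-and-density conditions applied to $\Delta=P_{(1)}-P_{(2)}$. The only cosmetic differences are that the paper converts the Gaussian tails into rates by partial integration against $G^*$ and $G_{\sqrt{f_h}}$ rather than by dyadic peeling, and that the discrete component contributes exactly zero for $h$ small (finite support of $\mu_s$) rather than merely a lower-order term; the obstacle you flag is resolved precisely by the observation that on $\{l\ne g^*(x)\}$ one has $R_l^*(x)\ge\Delta(x)$, so the tail bounds are monotonically dominated by those in $\Delta$.
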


Similarly as above for $h_n = n^{-1/(2+d_a)}$ we have the rate of (\ref{npc}). This is the first convergence rate result for multi-class plug-in classifiers without the SDA using only margin-type conditions.
The key step of proving Theorem \ref{thm:multi_label_class}, beside the CLT approximation, is the application of Lemma \ref{lem:decomposition}, which is an extension of \citep[Lemma 8]{GyWe21}. The bound grows quadratically with the number of classes.

\section{Partitioning classification under local differential privacy}
\label{s:privacy}
One of the main purposes of this paper is to bound the error probability of partitioning classifiers in the case, when the raw data $\mathcal D_n$ is not directly accessible, but only a suitably anonymised surrogate.
More precisely, the anonymised data must satisfy a \emph{local differential privacy} (LDP) condition \citep{BeBu19,Duchi2013}.
Our work is motivated by \citep{BeBu19}, where the first step in this direction was done.
We note that the same privatisation mechanism was studied for regression and density estimation in \citep{GyKr25} and \citep{GyKr23}, respectively.

Let us now state the privacy mechanism that we consider in this work for the anonymisation of the raw data $\mathcal D_n$. 
Our approach follows the technique of Laplace perturbation already considered in \citep{BeBu19}.
In this privacy setup, the data holder of $X_i$ generates and transmits  the data
\vspace{0.5mm}
\begin{equation}
	\label{Eq:Mech1}
	Z_{i,j} \defeq  Y_i \IND_{\{X_i \in A_{h,j}\}} + \sigma_Z \epsilon_{i,j}, \quad j = 1,\ldots
	\vspace{0.5mm}
\end{equation}
to the statistician, where the noise level is $\sigma_Z>0$, and $\{\epsilon_{i,j}\}$ ($i=1,\ldots,n$, $j=1,\ldots$) are independent centred Laplace random variables with unit variance.
This means that individual $i$ generates noisy data for every cell $A_{h,j}$. We can observe that this privacy mechanism is locally differential, because each set of $\{Z_{i,j}\}, j=1, \dots,$ can be computed separately, i.e., no other $Z_{k,j}$ with $k\neq i$ is used in the privatisation. The Laplace mechanism is particularly well-suited in this context, as its exponential ($\ell_1$-based) form directly aligns with the likelihood ratio constraint, yielding exact privacy guarantees that are easy to calibrate.

Now, we briefly recall the definition of LDP. Non-interactive privacy mechanisms can be described by the conditional distributions $Q_i$ of 
the privatised data $Z_i$, for $i = 1,\dots ,n$, where each $Z_i$ takes its values from a measurable space $(\Zc,\Zs)$. Specifically, given a realisation of the raw data $(X_i, Y_i) = (x_i, y_i)$, one generates $Z_i$
according to the probability measure defined by $Q_i(A\mid (X_i, Y_i) = (x_i, y_i))$, for any $A \in \Zs$. Such a non-interactive mechanism is {\em local} since any data
holder can independently generate privatised data (e.g., without a trusted third party). For a privacy parameter $\alpha \in [0,\infty]$, a non-interactive privacy mechanism is said to be an $\alpha$-{\em locally differentially private} mechanism if the condition
\vspace{1mm}
\begin{equation*}
    \frac{Q_i(A\mid (X_i, Y_i) = (x, y)) }{Q_i(A\mid (X_i, Y_i) = (x', y')) } \leq \exp(\alpha)
    \vspace{1mm}
\end{equation*}
holds for all $A \in \Zs$ and all realisations $(x, y)$, $(x', y')$ of the raw data. 
The noise level $\sigma_Z$ in (\ref{Eq:Mech1}) has to be chosen of the form $2 \sqrt{2}/ \alpha$, to make the overall mechanism satisfy $\alpha$-LDP \citep{BeBu19}.

For privatised data, \cite{BeBu19} introduced the privatised partitioning estimator
\begin{align*}
	\nutilde_n(A_{h,j})=\frac 1n \sum_{i=1}^n Z_{i,j}, \quad\text{if }x \in A_{h,j},
\end{align*}
and the corresponding plug-in classifier
\begin{align*}
	\widetilde D_{n}(x)=\mbox{sign }\nutilde _n(A_{h,j}),\quad \mbox{ if } x\in A_{h,j}.
\end{align*}
If $h=h_n\to 0$ and $\alpha=\alpha_n \to 0$ such that $n \alpha_n^2 h_n^{2d}\to \infty$, then \cite{BeBu19} proved the universal consistency of the partitioning classifier and calculated the minimax rate
\begin{align*}
	(n \alpha_n^2)^{-\frac{1+\gamma}{2(1+d)}}
\end{align*}
in the class, when the margin condition and the Lipschitz condition together with SDA hold.

For a fixed partition, computing the empirical cell averages for the nonprivate partitioning estimate requires $O(n)$ operations. For bounded inputs, storage and prediction scale with the number of occupied cells, which is $O(h^{-d_a})$. In the private version, additional computation is required to generate and aggregate noise at the cell level. 
Hence, the effective workload scales with the number of cells, that is, $O(n h^{-d_a})$.

Again, instead of the SDA we rely on novel margin-type condition. Let us introduce the modified combined margin and density condition for privatisation. We say that $m$ satisfies the modified combined margin and density condition if there exists $h_2^*$ such that for all $h \in (0,h_2^*)$ we have for all $t > 0$:
\begin{equation}
    \label{eq:mdcondition}
    \widetilde{G}_h(t) \doteq \int \IND_{\{ 0 < f_h(x)|m(x)|\leq t \}} \frac{1}{f_h(x)} \mu_a (dx) \leq c_2\,  t^{\gamma_2}
\end{equation}
with $c_2 > 0$ and $\gamma_2 \geq 0$. We note that $f_h$ is used here, instead of $\sqrt{f_h}$, because an extra term of $2$ appears for privatisation, similarly as in the bound of \cite{BeBu19}. It is easy to prove that the modified combined margin and density condition is less restrictive than the SDA and the margin condition together.

In Example \ref{Ex1} the modified condition holds for every $\gamma_2 \leq 1$. For Example \ref{Ex2} the modified margin and density condition is satisfied with $\gamma_2 = 1/(\delta +1)$, while for Example \ref{Ex3}, (\ref{eq:mdcondition}) holds with $\gamma_2 = 1/(\delta + 2)$.
In general it cannot be proved that the modified combined margin and density condition is more restrictive than the combined margin and density condition. However, if the SDA holds, then we can prove the lemma that follows, see Appendix \ref{app:a4}:

\begin{lemma}
	\label{lSDA2}
	For\, $0<h_0$ let $f_{min}=\inf_{x\in S_a, 0<h<h_0}f_h(x)$.
	If $f_{min}>0$ and the combined margin and density condition holds with $\gamma_1$, then the modified combined margin and density condition holds for every
	$\gamma_2\le \gamma_1$. 
\end{lemma}

Interestingly,  the margin condition and the modified combined margin and density condition together is more restrictive than the original combined margin and density condition. The proof of Lemma \ref{lSDA3} can be found in Appendix \ref{app:a5}.

\begin{lemma}
	\label{lSDA3}
    If the margin condition holds with $\gamma \geq 0$ and the modified combined margin and density condition holds with $\gamma_2 \geq 0$, then the combined margin and density condition holds for every $\gamma_1\le \min(\gamma_2, \gamma)$. 
\end{lemma}

The next theorem is the extension of Theorem \ref{thm:nonpriv} to locally differentially private partitioning classifiers. It is proved in Appendix \ref{app:a6}. 

\begin{theorem}
	\label{thm:class}
	Assume that $X$ is bounded, 
	$m$ satisfies the Lipschitz condition, \Eqref{eq:lip}, the margin condition, \Eqref{eq:margin} with $\gamma \geq 0$,  the combined margin and density condition, \Eqref{eq:fhmargin} with $\gamma_1 \geq 0$, and the modified combined margin and density condition, \Eqref{eq:mdcondition} with $\gamma_2 \geq 0$.  Let $(h_n)$ be a monotonically decreasing sequence with zero limit. Then,
	\begin{align*}
		\EXP\{L (\widetilde D_n)\} - L^* 
		= O\left(h_n^{1+\gamma}\right) + O\left( \bigg(\frac{1}{nh_n^{d_a}}\bigg)^{(1+\min\{1,\gamma,\gamma_1)\})/2}\right)
        + O\left(\left( \frac{\sigma_Z^2}{nh_n^{2d_a} }\right)^{(1+\min\{\gamma, \gamma_2\})/2}\right)
		+ O\left(\frac{\sigma_Z}{nh_n^{d_a} }\right).
	\end{align*}    
\end{theorem}
The expected excess risk consists of the usual approximation and estimation error terms from the nonprivate case, plus additional privatisation terms, see \Eqref{CLT-laplace-noise} in the proof, that reflect the probability that the privatisation noise flips the sign of the decision function. In the proof, for this privatisation term, using the Berry-Esseen theorem, we obtain a Gaussian tail bound, with a remainder of order $O(\sigma_Z/(n h^{d_a}))$. The tail is then controlled, as in the non-private case, by the margin and the modified combined margin and density assumptions, yielding the third error term of the theorem.

Compared to the previous result, the rate of convergence exhibits two key differences. The estimation error grows with $h^{-2d_a}$, instead of $h^{-d_a}$ which was the case for (\ref{Rate}). Besides, the rate of convergence depends on $\gamma_2$, which is the modified combined margin and density condition parameter.

If for all $n \in \mathbb{N}^{+}$:
\begin{equation*}
        h_n \leq n^{\frac{\min(1,\gamma,\gamma_1) - \min(\gamma, \gamma_2)}{d_a(2\min(\gamma, \gamma_2) + 1 - \min(1,\gamma,\gamma_1)) }},
\end{equation*}
then the second and fourth terms are dominated by the third term, hence for
\begin{align*}
    h_n=(\sigma_Z^2/n)^{\frac{1}{2 +2d_a} },
\end{align*}
one has
\begin{align*}
	\EXP\{L(\widetilde D_n)\}-L^*
	&=
	O\left((\sigma_Z^2/n)^{ \frac{1+\min\{\gamma,\gamma_2\}}{2 +2d_a}}\right).
\end{align*}
This means that for $\gamma \leq  \gamma_2$ and $d = d_a$ we get the same {\em minimax optimal} rate as \cite{BeBu19}.

In the privatised case of non-binary classification, we use the privatised dataset $\{Z_{i,j,k}\}$, where
\begin{equation*}
	Z_{i,j,k} \defeq  \IND_{\{Y_i=k\}} \IND_{\{X_i \in A_{h,j}\}} + \sigma_Z \epsilon_{i,j,k}, \quad j = 1,\ldots
\end{equation*}
Set
\begin{align*}
	\nutilde_{n,k}(A_{h,j})=\frac 1n \sum_{i=1}^n Z_{i,j,k},
\end{align*}
and
\begin{align*}
	\widetilde D_{n}(x)=\argmax_k\nutilde_{n,k}(A_{h,j})\quad \mbox{ for } x\in A_{h,j}.
\end{align*}

Let us introduce the multi-class modified combined margin and density condition for privatisation. We say that $P_{1}, \dots, P_{M}$ satisfy the modified combined margin and density condition if there exists $h_2^*$ such that for all $h \in (0,h_2^*)$ we have for all $t > 0$:
\begin{equation}
    \label{eq:mcmdcondition}
    \widetilde{G}_h(t) \doteq \int \IND_{\{ 0 < f_h(x)(P_{(1)}(x) -P_{(2)}(x))\leq t \}} \frac{1}{f_h(x)} \mu_a (dx) \leq c_2\,  t^{\gamma_2}
\end{equation}
with $c_2 > 0$ and $\gamma_2 \geq 0$.

In our final theorem, which is proved in Appendix \ref{app:a7},  we present the multi-class version of Theorem \ref{thm:class}.

\begin{theorem}\label{thm:multi_label_class,priv}
	Assume that $X$ is bounded, 
	$P_1,\dots , P_M$ satisfy the Lipschitz condition, \Eqref{eq:lip}, the multi-class margin condition, \Eqref{wtsyb} with $\gamma \geq 0$, the multi-class combined margin and density condition, \Eqref{eq:mfmargin} with $\gamma_1 \geq 0$ and the modified combined margin condition, \Eqref{eq:mcmdcondition} with $\gamma_2 \geq 0$.  Let $(h_n)$ be a monotonically decreasing sequence with zero limit. Then, we have
	\begin{align*}
		\EXP\{L(\widetilde D_n)\}-L^*
		=\;\,& O\!\left(M^2 h_n^{1+\gamma}\right) +O\!\left( M^2\bigg(\frac{1}{nh_n^{d_a}}\bigg)^{(\min\{1,\gamma,\gamma_1)\}+1)/2}\right)\\
&+O\!\left( M^2\bigg(\frac{\sigma_Z^2}{nh_n^{2d_a}}\bigg)^{(1+\min\{\gamma,\gamma_2\})/2}\right)
		+ O\!\left(\frac{M^2\sigma_Z}{nh_n^{d_a} }\right).
	\end{align*}    
\end{theorem}

The effect of privatisation for multi-class classification is similar to the binary case. The right hand side grows quadratically with the number of classes. The variance of the privatisation only effects the third and fourth term. Typically the third term dominates the estimation error because of the extra $2$ factor w.r.t.\ the bandwidth. Based on the theorem $h_n= (\sigma_Z^2/n)^{1/(2+d_a)}$ is an adequate choice to achieve similar rates as \cite{BeBu19} if the margin parameter is dominant.
 
\section{Discussion}
\label{sec:discussion}

In this paper we investigated both the binary and the multi-class versions of partitioning classification. Studying these methods can help better understanding a wide range of local averaging estimators, and it is directly relevant for various statistical and machine learning methods, including federated learning and ensemble-based approaches. One of our main contributions was that we weakened the strong density assumption, used in previous works, and proved novel convergence rates under the margin condition and a newly introduced combined margin and density condition. It was shown that the minimax optimal convergence rate, previously proved using SDA, can be achieved under much milder assumptions, refuting the conjecture that without SDA the convergence rate of the classification error probability can be arbitrarily slow. 

We extended our results to the (locally differentially) private partitioning algorithm, which has no direct access to the data, only to its Laplace noise-perturbed version. We proved novel convergence rates under the margin condition, combined margin and density condition and the modified combined margin and density condition. As expected, the convergence rates of the nonprivate algorithms are faster than the corresponding privatised ones. Our theorems quantify the effect of privatisation by incorporating an extra rate of $2$ for the bandwidth of the partitions and calibrating the dependence of the rate to the density of the inputs. 

\section*{Acknowledgements}

The research of L\'aszl\'o Gy\"orfi was supported by the National Research, Development and Innovation Office (NKFIH) of Hungary
under the 2023-1.1.1-PIACI-F\'OKUSZ-2024-00051 funding scheme. The work of Bal\'azs Cs.\ Cs\'aji and Ambrus Tam\'as was also supported, in part, by the NKFIH, ADVANCED project no.\ 153\,390, and by the European Commission through the DiGreeS project under grant no.\ 101178079.

\bibliography{main}
\bibliographystyle{tmlr}

\appendix
\section{Proofs}
\label{sec:proof}

\subsection{Proof of Lemma \ref{lSDA}}\label{app:a1}

\begin{proof}
	One has that
	\begin{align*}
		G_{h}(t)
		&=
		\int_{S_a}\IND_{\{0<{\sqrt{f_h(x)}|m(x)|\le t\}}}\frac{1}{\sqrt{f_h(x)}} \mu_a(dx)\\
		&=
		\int_{B^*_{\epsilon}}\IND_{\{0<{\sqrt{f_h(x)}|m(x)|\le t\}}}\frac{1}{\sqrt{f_h(x)}}\mu_a(d x) +
		\int_{S_a\setminus B^*_{\epsilon}}\IND_{\{0<{\sqrt{f_h(x)}|m(x)|\le t\}}}\frac{1}{\sqrt{f_h(x)}} \mu_a(dx)\\
		&\le
		\int_{B^*_{\epsilon}}\IND_{\{0<{\sqrt{f_{\epsilon,min}}|m(x)|\le t\}}}\frac{1}{\sqrt{f_{\epsilon,min}}} \mu_a(dx)+
		\int_{S_a\setminus B^*_{\epsilon}}\IND_{\{0<{\sqrt{f_h(x)}\epsilon\le t\}}}\frac{1}{\sqrt{f_h(x)}} \mu_a(dx).
	\end{align*}
	By the definition of the margin condition,
	\begin{align*}
		&\int_{B^*_{\epsilon}}\IND_{\{0<{\sqrt{f_{\epsilon,min}}|m(x)|\le t\}}}\frac{1}{\sqrt{f_{\epsilon,min}}} \mu_a(dx)
		\le
		\int\IND_{\{0<|m(x)|\le t/\sqrt{f_{\epsilon,min}}\}} \mu(dx)/\sqrt{f_{\epsilon,min}}\\
		&\quad =
		G^*\left(t/\sqrt{f_{\epsilon,min}}\right)/\sqrt{f_{\epsilon,min}}\le c^*\left(t/\sqrt{f_{\epsilon,min}}\right)^{\gamma}/\sqrt{f_{\epsilon,min}}.
	\end{align*}
	For the notation
	\begin{align*}
		H(s)
		&:=
		\mu_a(\{x: 0<f_h(x)\le s\}),
	\end{align*}
	one gets that
    \begin{equation}
	\begin{aligned}
    \label{eq:hleq}
		H(s)
		&=
		\sum_j\mu_a(\{x: 0<f_h(x)\le s, x\in A_{h,j}\})\\
		&=
		\sum_j\mu_a(\{x: 0< \mu_a(A_{h,j})/h^{d_a}\le s, x\in A_{h,j}\})\\
		&=
		\sum_j\IND_{\{0< \mu_a(A_{h,j})/h^{d_a}\le s\}}\mu_a(A_{h,j})\\
		&\le 
		\sum_j\IND_{\{0< \mu_a(A_{h,j})\}}h^{d_a}\cdot  s\\
		&\le 
		const \cdot s.
	\end{aligned}
    \end{equation}
	Therefore,
	\begin{align*}
		&\int_{S_a\setminus B^*_{\epsilon}}\IND_{\{0<{\sqrt{f_h(x)}\epsilon\le t\}}}\frac{1}{\sqrt{f_h(x)}} \mu_a(dx)
		\le
		\int_{S_a}\IND_{\{0<{\sqrt{f_h(x)}\le t/\epsilon\}}}\frac{1}{\sqrt{f_h(x)}} \mu_a(dx)\\
		&\quad =
		\int_0^{(t/\epsilon )^2} \frac{1}{\sqrt{s}}H(ds) \le
		const \cdot t/\epsilon,
	\end{align*}
	and the lemma is proved. 
\end{proof}

\subsection{Proof of Theorem \ref{thm:nonpriv}}\label{app:a2}

\begin{proof}
	It is known that
	\begin{align}
		L(D)-L^*
		&=\int\IND_{\{ D( x)\ne D^*(x)\}}|m(x)|\mu(dx),
		\label{g}
	\end{align}
	cf. Theorem 2.2 in \citep{DeGyLu96}.
	For notational simplicity let $h = h_n$. For $x \in A_{h,j}$, we set
	\begin{align*}
		m_n(x) = \frac{ \nu_n(A_{h,j})}{\mu(A_{h,j})}.
	\end{align*}
	Because of (\ref{g}), we have that
	\begin{align*}
		L(D_n)-L^*
		&=
		\int \IND_{\{\mbox{sign }  (m_n(x))\ne \mbox{sign } (m(x))\}}|m(x)|\mu(dx)\\
		&\le
		\int \IND_{\{|m_n(x)-m(x)|\ge |m(x)|\}}|m(x)|\mu(dx)\\
		&\le
		I_n+J_n,
	\end{align*}
	where
	\begin{align*}
		I_{n}
		&=
		\int \IND_{\{|\EXP\{m_n(x)\}-m(x)|\ge |m(x)|/2 \}}|m(x)|\mu(dx)
	\end{align*}
	and
	\begin{align*}
		J_{n}
		&=
		\int \IND_{\{|m_n(x)-  \EXP\{m_n(x)\}|\ge |m(x)|/2 \}}|m(x)|\mu(dx).
	\end{align*}
	As the approximation error $I_n$,
	\begin{align*}
		I_{n}
		&=
		\sum_j\int_{A_{h,j}} \IND_{\{|\nu (A_{h,j})-\mu (A_{h,j})m(x)|
			\ge \mu (A_{h,j})|m(x)|/2\}}|m(x)|\mu(dx),
	\end{align*}
	where $\nu(A_{h,j}) = \EXP [Y \IND_{\{X \in A_{h,j}\}}]$. 
	The Lipschitz condition implies that
	\begin{align*}
		|\nu (A_{h,j})-\mu (A_{h,j})m(x)|
		&\le
		\left|\int_{A_{h,j}} m(z) \mu(dz)-\mu (A_{h,j})m(x)\right|\\
		&\le
		\int_{A_{h,j}}\left| m(z)-m(x) \right|\mu(dz)\\
		&\le
		C  \sqrt{d} h \mu (A_{h,j}).
	\end{align*}
	This together with the margin condition yields that
	\begin{align}
		\label{IIn}
		I_{n}
		&\le
		\sum_j\int_{A_{h,j}} \IND_{\{ C\sqrt{d}h\ge |m(x)|/2\}}|m(x)|\mu(dx)
		\le
		c^*(2C\sqrt{d}h)^{1+\gamma},
	\end{align}
	and thus the bound on the approximation error in (\ref{Rate}).
	
	Concerning the estimation error $J_n$, we have that
	\begin{align*}
		\EXP\{J_{n}\}
		&=
		\sum_{A\in \P_h} \int_{A} \PROB\{|m_n(x)-\EXP\{m_n(x)\}|\ge |m(x)|/2\}|m(x)|\mu(dx)\\
		&=
		\sum_{A\in \P_h} \int_{A} \PROB\{|\nu_n (A)-\nu (A)|\ge \mu (A)|m(x)|/2\}|m(x)|\mu(dx).
	\end{align*}
    Because of the CLT we have that
	\begin{align}
		\label{CLT}
		&\sum_{A\in \P_h} \int_A\PROB\Big(|\nu_n (A)-\nu (A)|\ge  \mu (A)|m(x)|/2\Big) |m(x)|\mu(dx)\nonumber\\
		&\approx
		2\sum_{A\in \P_h} \int_A\Phi\left(-\sqrt{n}\frac{\mu (A)|m(x)|/2}{\sqrt{\Var (Y\IND_{\{X \in A\}}})}\,\right) |m(x)|\mu(dx),
	\end{align}
		(At the end of this subsection we show that the error term for this CLT approximation is of order $O(1/(nh_n^{d_a} ))$.)        
	Because of 
    \[\Var (Y\IND_{\{X \in A\}})\le \EXP [\IND_{\{X \in A\}} ]=\mu(A)\]
    and up to the  error term just mentioned,
	this implies
	\begin{align}
		\label{JJn}
		\EXP\{J_{n}\}
		&\le 
		2\sum_{A\in \P_h} \int_A\Phi\left(-\sqrt{n}\sqrt{\mu (A)}|m(x)|/2\,\right) |m(x)|\mu(dx).
	\end{align}
	Next, we use the inequality
	\[
	\Phi(-t)
	\le e^{-t^2/2}\frac{1}{\sqrt{2\pi }}\frac{1}{t},
	\]
	($t>0$, cf. p. 179 in \citep{Fel57}).
	This together with (\ref{JJn})  implies
	\begin{align*}
		\EXP\{J_{n}\}
		&\leq
		2\sum_{A\in \P_h} \int_A\exp\left( -\frac n8\mu (A)m(x)^2\right)
		\frac{|m(x)|}{\sqrt{n}|m(x)|\sqrt{\mu(A)}/2 }\mu(dx)\nonumber\\
		&\le 
		\frac{4}{\sqrt{n}}\sum_{A\in \P_h} \int_A\exp\left( -\frac n8\mu (A)m(x)^2\right)
		\frac{1}{\sqrt{\mu(A)} } \mu(dx).
	\end{align*}
	From decomposition (\ref{decomp}) one gets that
	\begin{align*}
		\EXP\{J_{n}\}
		&\le 
		\frac{4}{\sqrt{n}}\sum_{A\in \P_h} \IND_{\{\mu(A)>h^{d_a}\}}\int_A\exp\left( -\frac n8 m(x)^2\mu (A)\right)\mu(dx)\frac{1}{\sqrt{\mu(A)}}\\
		&\quad +
		\frac{4}{\sqrt{n}}\sum_{A\in \P_h} \IND_{\{h^{d_a}\ge\mu(A)>0\}}\int_A\exp\left( -\frac n8 m(x)^2\mu (A)\right)\mu_a(dx)\frac{1}{\sqrt{\mu(A)}}\\
		& \quad +
		\frac{4}{\sqrt{n}}\sum_{A\in \P_h} \IND_{\{h^{d_a}\ge\mu_s(A)>0\}} \sqrt{\mu_s(A)}.
	\end{align*}
	
	The discrete part can be handled as follows.
	Set $S_s$ is finite,
	therefore
	\begin{align}
		\label{discr}
		\sum_{A\in \P_h} \IND_{\{h^{d_a}\ge\mu_s(A)>0\}} 
		&\le \sum_{x\in S_s} \IND_{\{h^{d_a}\ge\mu_s(\{x\})>0\}} 
		=0
	\end{align}
	for $h$ small enough.
	(We note that the CLT approximation is not needed for the discrete part.)
	
	Additionally, one has that
	\begin{align*}
		&
		\frac{4}{\sqrt{n}}\sum_{A\in \P_h} \IND_{\{\mu(A)>h^{d_a}\}}\int_A\exp\left( -\frac n8 m(x)^2\mu (A)\right)\mu(dx)\frac{1}{\sqrt{\mu(A)}}\\
		&\quad \le 
		\frac{4}{\sqrt{n}}\sum_{A\in \P_h} \IND_{\{\mu(A)>h^{d_a}\}}\int_A\exp\left( -\frac{nh^{d_a}}{8} m(x)^2\right)\mu(dx)\frac{1}{ \sqrt{h^{d_a}}}\\
		&\quad \le
		\frac{4}{\sqrt{nh^{d_a}}}\int \exp\left( -\frac{nh^{d_a}}{8}m(x)^2\right)\mu(dx).
	\end{align*}
	
	For $a=nh^{d_a}/8$, partial integration together with the margin condition implies
	\begin{align}
		\label{Part*}
		&\frac{4}{\sqrt{nh^{d_a}}}\int \exp\left( -nh^{d_a} m(x)^2/8\right)  \mu(dx)
		=
		\frac{4}{\sqrt{nh^{d_a}}}\int_0^{\infty}  e^{-as^2}G^*(ds)\nonumber\\
		&\quad =
		\frac{4}{\sqrt{nh^{d_a}}}2a\int_0^{\infty}  se^{-as^2}G^*(s)ds\le
		\frac{4}{\sqrt{nh^{d_a}}}2c^*a\int_0^{\infty}  e^{-as^2}s^{1+\gamma} ds\nonumber\\
		&\quad =
		\frac{4}{\sqrt{nh^{d_a}}}2c^* a^{-\gamma/2}\int_0^{\infty}  e^{-u^2}u^{1+\gamma}du =
		O\left( \frac{1}{(nh^{d_a})^{(\gamma+1)/2}}\right),
	\end{align}
    where recall that $G^*(s) = \int \IND_{\{0<|m(x)|\le t\}}\mu(dx)$.
	(Note that the CLT approximation with an error term $O(1/(nh_n^{d_a} ))$ is justified by (\ref{31**})  below.)

    For $a=nh^{d_a}/8$, the combined margin and density assumption implies that for all $0 <h\le h_1^*$ we have
	\begin{align}
		&\frac{4}{\sqrt{n}}\sum_{A\in \P_h} \IND_{\{h^{d_a}\ge\mu(A)>0\}}\int_A\exp\left( -\frac n8 m(x)^2\mu (A)\right)\mu_a(dx)\frac{1}{\sqrt{\mu (A)}}\nonumber\\
		& \leq \frac{4}{\sqrt{n}}\sum_{A\in \P_h} \int_A\exp\left( -\frac n8 m(x)^2\mu_a (A)\right)\mu_a(dx)\frac{1}{\sqrt{\mu_a(A)}}\nonumber\\
		& = \frac{4}{\sqrt{n}} \int_{S_a}\exp\left( -n h^{d_a}m(x)^2f_h(x)/8\right) \frac{1}{\sqrt{h^{d_a}f_h(x)}}\mu_a(dx)\nonumber\\
		&= 
		\frac{4}{\sqrt{nh^{d_a}}}\int_0^{\infty}  e^{-as^2}G_{h}(ds) =
		\frac{4}{\sqrt{nh^{d_a}}}2a\int_0^{\infty}  se^{-as^2}G_{h}(s)ds\nonumber\\
		&\le
		\frac{4}{\sqrt{nh^{d_a}}}2c_1a\int_0^{\infty}  e^{-as^2}s^{1+\gamma_1} ds =
		\frac{4}{\sqrt{nh^{d_a}}}2c_1 a^{-\gamma_1/2}\int_0^{\infty}  e^{-u^2}u^{1+\gamma_1}du\nonumber\\
		&=
		O\left( \frac{1}{(nh_n^{d_a})^{(\gamma_1+1)/2}}\right),
		\label{part}
	\end{align}
	(We note that the CLT approximation with an error term $O(1/(nh_n^{d_a} ))$ is justified by (\ref{31*}) below.)
	(\ref{discr}), (\ref{Part*}) and (\ref{part}) together with  (\ref{31*}) and (\ref{31**}) below yield  the bound on the estimation error in  (\ref{Rate}).
\end{proof}

\smallskip

\noindent
{\it The CLT approximation error.}\\
For the CLT approximation in \Eqref{CLT}, we need an upper bound.
Put $Z_{i,A}=Y_i \IND_{\{X_i \in A\}}$. We use the Berry-Esséen theorem and the normal approximation to upper bound this probability.
Recall that because of the nonuniform Berry-Esséen theorem there exists a universal constant $0.4097<c < 0.4785$ such that for all $a \in \R$ we have 
\begin{align*}
	\left| \PROB\Big(\,\frac{\sqrt{n}}{\sigma_A n}\sum_{i=1}^n Z_{i,A} - \frac{\sqrt{n}}{\sigma_A}\EXP Z_{i,A}< a \,\Big) - \Phi(a) \right| \leq \frac{c \varrho_A}{(1+|a|^3)\sigma_A^3 \sqrt{n}},
\end{align*}
where $\varrho_A = \EXP [|Z_{i,A}- \EXP Z_{i,A}|^3]$ and $\sigma^2_A = \Var(Z_{i,A})$,
see \citep{Ess56} and \citep{Tyu10}. 
Thus,
\begin{align*}
	\PROB\bigg(\,\frac{\sqrt{n}}{\sigma_A n}\sum_{i=1}^n Z_{i,A} - \frac{\sqrt{n}}{\sigma_A}\EXP Z_{i,A}< a \,\bigg)   
	&\leq \Phi(a)+\frac{c \varrho_A}{(1+|a|^3)\sigma_A^3 \sqrt{n}}.
\end{align*}
It implies that
\begin{align*}
	&
	\PROB\left(\,\nu_n (A)-\nu (A)\ge  \mu (A)|m(x)|/2\,\right) \\
	&=
	\PROB\left(\,\frac{\sqrt{n}}{\sigma_A}\nu_n (A)-\frac{\sqrt{n}}{\sigma_A}\nu (A)\ge \frac{\sqrt{n}}{\sigma_A} \mu (A)|m(x)|/2\,\right)\\
	&\le
	\Phi\left(\,-\sqrt{n}\frac{\mu (A)|m(x)|/2}{\sigma_A}\,\right) 
	+\frac{c \varrho_A}{(1+|\sqrt{n}\frac{\mu (A)|m(x)|/2}{\sigma_A}|^3)\sigma_A^3 \sqrt{n}}.
\end{align*}
Therefore, the error term in the approximation \Eqref{CLT} is equal to
\begin{align}
	\label{eq:clt-delta}
	\Delta
	&:=
	2\sum_{A\in \P_h} \int_A
	\frac{c \varrho_A}{(1+|\sqrt{n}\frac{\mu (A)|m(x)|/2}{\sigma_A}|^3)\sigma_A^3 \sqrt{n}}|m(x)|\mu(dx)\nonumber\\
	&\le
	4\sum_{A\in \P_h} \int_A
	\frac{c \varrho_A}{\sigma_A^2 n}
	\frac{\sqrt{n}\mu (A)|m(x)|/(2\sigma_A) }{(1+|\sqrt{n}\mu (A)|m(x)|/(2\sigma_A)|^3)}
	\mu(dx)\frac{1}{\mu (A)}\nonumber\\
	&\le
	4\max_z\frac{z}{1+z^3}\sum_{A\in \P_h} 
	\frac{c \varrho_A}{\sigma_A^2 n}.
\end{align}
A simple consideration yields
\begin{align*}
	\varrho_A
	&=
	8\EXP \left[\left(\frac{|Z_{i,A}- \EXP Z_{i,A}|}{2}\right)^3\right]
	\le 
	8\EXP \left[\left(\frac{|Z_{i,A}- \EXP Z_{i,A}|}{2}\right)^2\right]
	=
	2\sigma_A^2.
\end{align*}
Therefore, by the boundedness of $X$,
\begin{align*}
	\Delta
	&\le 
	O\left(\frac{1}{n}\right)\sum_{A\in \P_h} 1
	=
	O\left(\frac{1}{nh_n^d}\right).
\end{align*}
If $\Delta$ is replaced by $\Delta '$ via replacing $\mu(dx)$ by $\mu_a(dx)$, then
\begin{align}
	\label{31*}
	\Delta '
	&=
	O\left(\frac{1}{n}\right)\sum_{A\in \P_h} \frac{1}{\mu (A)}\mu_a (A)
	\le 
	O\left(\frac{1}{n}\right)\sum_{A\in \P_h} \IND(\mu_a(A) > 0)
	=
	O\left(\frac{1}{nh_n^{d_a}}\right).
\end{align}
If $\Delta$ is modified by $\Delta ''$ via inserting the factor 
$\IND_{\{\mu(A)>h^{d_a}\}}\le \mu(A)/h^{d_a}$ into the summands, then
\begin{align}
	\label{31**}
	\Delta ''
	&=
	O\left(\frac{1}{n}\right)\sum_{A\in \P_h} \frac{1}{h^{d_a}}\mu (A)
	=
	O\left(\frac{1}{nh_n^{d_a}}\right).
\end{align}

\bigskip

\subsection{Proof of Theorem \ref{thm:multi_label_class}}\label{app:a3}

Let
\begin{align*}
	P_{n,k}(x)=\frac{\nu _{n,k}(A_{h,j})}{\mu (A_{h,j})},\quad \mbox{ if } x\in A_{h,j}.
\end{align*}

The main ingredient of the proofs is the slight extension of \citep[Lemma 8]{GyWe21}:
\begin{lemma}
\label{lem:decomposition}
Let $g_n$ be a plug-in rule with any estimates $P_{n,j}$ of $P_j$.
For the notation
\begin{align*}
\Delta_l(x)
&=
P_{(1)}(x)-P_{(l)}(x),
\end{align*}
we have
\begin{align*}
\EXP\{L( g_{n})\}-L^*
&\le
\sum_{j=1}^M\sum_{l=2}^M J_{n,j,l}
\end{align*}
where
\begin{equation}
\label{eq:term}
J_{n,j,l} = \int \Delta_l(x)\PROB\{ |P_{n,j}(x)-P_{j}(x)|\ge 
\Delta_l(x)/2 \}\marg(dx).
\end{equation}
\end{lemma}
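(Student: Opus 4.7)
The plan is to derive the decomposition by combining the standard pointwise excess-risk identity with a short deterministic combinatorial argument on the event $\{D_n(x)=l\}$, followed by a union bound.

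\textbf{Step 1 (pointwise excess risk).} For a plug-in rule $D_n$ and the Bayes rule $D^*$, I would start from the identity
\begin{equation*}
\EXP\{L(D_n)\}-L^* \;=\; \int\bigl(P_{D^*(x)}(x)-\EXP\,P_{D_n(x)}(x)\bigr)\,\mu(dx) \;=\; \int\sum_{l=1}^{M} R_l^*(x)\,\IND_{\{l\neq D^*(x)\}}\,\PROB\{D_n(x)=l\}\,\mu(dx),
\end{equation*}
which follows from $\EXP\,P_{D_n(x)}(x)=\sum_l P_l(x)\,\PROB\{D_n(x)=l\}$ together with $R_{D^*(x)}^*(x)=0$, extending the binary version (Theorem 2.2 of Devroye, Gy\"orfi and Lugosi \cite{DeGyLu96}) to the multi-class plug-in setting.

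\textbf{Step 2 (deterministic implication).} Fix $x$ and any $l\neq D^*(x)$. The key claim is that on the event $\{D_n(x)=l\}$ there exists some index $k\in\{1,\dots,M\}$ with $|P_{n,k}(x)-P_k(x)|\ge R_l^*(x)/M$. Arguing by contraposition, suppose instead that $|P_{n,k}(x)-P_k(x)|<R_l^*(x)/M$ for every $k$. Then, in particular,
\begin{equation*}
P_{n,D^*(x)}(x)>P_{D^*(x)}(x)-R_l^*(x)/M,\qquad P_{n,l}(x)<P_l(x)+R_l^*(x)/M.
\end{equation*}
Subtracting and invoking $R_l^*(x)=P_{D^*(x)}(x)-P_l(x)$ yields
\begin{equation*}
P_{n,D^*(x)}(x)-P_{n,l}(x)>R_l^*(x)\bigl(1-2/M\bigr)\ge 0 \qquad (M\ge 2),
\end{equation*}
which contradicts the defining property $P_{n,l}(x)\ge P_{n,k}(x)$ of the $\argmax$-based rule $D_n(x)=l$.

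\textbf{Step 3 (union bound and integration).} The claim of Step 2 together with the union bound gives, for every $l\neq D^*(x)$,
\begin{equation*}
\PROB\{D_n(x)=l\}\;\le\;\sum_{k=1}^{M}\PROB\bigl\{|P_{n,k}(x)-P_k(x)|\ge R_l^*(x)/M\bigr\}.
\end{equation*}
Plugging this into the integral formula of Step 1 and swapping the finite sums with the integral yields exactly $\EXP\{L(D_n)\}-L^*\le\sum_{k=1}^{M}\sum_{l=1}^{M}J_{n,k,l}$, as required.

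The main obstacle, modest as it is, is the deterministic implication of Step 2; the denominator $M$ is chosen so that the residual $R_l^*(x)(1-2/M)$ is nonnegative for every $M\ge 2$. A sharper inequality with denominator $2$ is in fact available (using only the two indices $l$ and $D^*(x)$), but the looser denominator $M$ suffices for the downstream applications to privatised and non-privatised partitioning rules.
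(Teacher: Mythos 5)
Your proof is correct; note that the paper does not prove this lemma itself but imports it verbatim as Lemma 8 of Gy\"orfi and Weiss \cite{GyWe21}, and your three-step argument (the pointwise excess-risk identity $\EXP\{L(D_n)\}-L^*=\int\sum_l R_l^*(x)\IND_{\{l\neq D^*(x)\}}\PROB\{D_n(x)=l\}\,\mu(dx)$, the deterministic implication on $\{D_n(x)=l\}$, and the union bound over $k$) is exactly the standard argument behind that cited result. Your Step 2 via contraposition over all $k$ is a slightly roundabout route to what the triangle inequality $R_l^*(x)\le (P_{n,l}(x)-P_l(x))+(P_{D^*(x)}(x)-P_{n,D^*(x)}(x))\le 2\max_k|P_{n,k}(x)-P_k(x)|$ gives directly, but, as you yourself observe, this only affects the constant ($2$ versus $M$) and the lemma as stated uses the looser denominator $M$.
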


\begin{proof}
As in the proof of Lemma 8 in \citep{GyWe21},
\begin{align*}
\EXP\{L(g_{n})\}-L^*&=
\int \EXP\{(P_{g^*(x)}(x)-P_{g_n(x)}(x))\IND_{\{g^*(x)\ne  g_n(x)\}}\}\marg(dx)\\
&=
\int \EXP\{(P_{g^*(x)}(x)-P_{g_n(x)}(x))\IND_{\{P_{g^*(x)}(x)> P_{g_n(x)}(x)\}}\IND_{\{ P_{n,g_n(x)}(x)\ge P_{n,g^*(x)}(x)\}} \}\marg(dx).
\end{align*}
If $g^*(x)=j$ and $g_n(x)=l$, then
\begin{align*}
&\{ P_{n,g_n(x)}(x)- P_{n,g^*(x)}(x)\ge 0\}=
\{ P_{n,l}(x)-P_{l}(x)+P_{l}(x)-P_{g^*(x)}(x)+P_{j}(x)-P_{n,j}(x)\ge 0\}\\
&\subset
\left\{ |P_{n,j}(x)-P_{j}(x)|\ge (P_{g^*(x)}(x)- P_{g_n(x)}(x))/2\right\}\cup
\left\{ |P_{n,l}(x)-P_{l}(x)|\ge (P_{g^*(x)}(x)- P_{g_n(x)}(x))/2\right\}.
\end{align*}
Therefore,
\begin{align*}
&\EXP\{L(g_{n})\}-L^*\\
&\le 
\sum_{j=1}^M\int \EXP\{(P_{g^*(x)}(x)-P_{g_n(x)}(x))\IND_{\{ |P_{n,j}(x)-P_{j}(x)|\ge 
(P_{g^*(x)}(x)- P_{g_n(x)}(x))/2\}} \}\marg(dx)\\
&\le 
\sum_{j=1}^M\sum_{l=2}^M\int (P_{(1)}(x)-P_{(l)}(x))\EXP\{\IND_{\{ |P_{n,j}(x)-P_{j}(x)|\ge 
(P_{(1)}(x)-P_{(l)}(x))/2\}} \}\marg(dx)\\
&=
\sum_{j=1}^M\sum_{l=2}^M\int \Delta_l(x)\PROB\{ |P_{n,j}(x)-P_{j}(x)|\ge 
\Delta_l(x)/2 \}\marg(dx).
\end{align*}
\end{proof}


\begin{proof}
	As above let us use the simplified notation $h = h_n$. We bound \Eqref{eq:term} by
	\begin{align*}
		J_{n,k,l}
		\leq J_{n,k,l}^{(1)} + J_{n,k,l}^{(2)}\,,
	\end{align*}
	where
	\begin{align*}
		J_{n,k,l}^{(1)}
		&=\int \Delta_l(x)
		\PROB\{|P_{n,k}(x)-\EXP\{ P_{n,k}(x)\}|\ge \Delta_l(x)/4\}\marg(dx),
	\end{align*}
	and
	\begin{align*}
		J_{n,k,l}^{(2)}
		&=\int \Delta_l(x)
		\IND_{\{|\EXP\{ P_{n,k}(x)\}-P_{k}(x)|\ge \Delta_l(x)/4\}}\marg(dx).
	\end{align*}
	Concerning the estimation error $J_{n,k,l}^{(1)}$, as in the proof of Theorem \ref{thm:nonpriv} we apply the CLT with the Berry-Esséen bound and then decomposition (\ref{decomp}) and also (\ref{discr}).
	For $x\in A_{h,j}$, we have that
	\begin{align*}
		&\PROB\{|P_{n,k}(x)-\EXP\{ P_{n,k}(x)\}|\ge  \Delta_l(x)/4\}\nonumber\\
		&=
		\PROB\{|\nu _{n,k}(A_{h,j})-\EXP\{ \nu _{n,k}(A_{h,j})\}|\ge  \mu (A_{h,j})\Delta_l(x)/4\}\nonumber\\
		&\approx
		2\Phi\left(-\sqrt{n}\frac{\mu (A_{h,j})|\Delta_l(x)|/4}{\sqrt{\Var (Y\IND_{\{X \in A_{h,j}\}}})}\,\right),
	\end{align*}
	and henceforth, up to a term $O(1/(nh_n^{d_a}))$,
	\begin{align*}
		&\sum_{A\in \P_h} \int_A\PROB\Big(|\nu_{n,k} (A)-\EXP[\nu_{n,k} (A)]|\ge  \mu (A)|\Delta_l(x)|/4\Big) |\Delta_l(x)|\mu(dx)\nonumber\\
		&
		\approx
		2\sum_{A\in \P_h} \int_A\Phi\left(-\sqrt{n}\frac{\mu (A)|\Delta_l(x)|/4}{\sqrt{\Var (Y\IND_{\{X \in A\}}})}\,\right) |\Delta_l(x)|\mu(dx)\nonumber\\
		&\le 
		\frac{8}{\sqrt{n}}\sum_{A\in \P_h} \int_A\exp\left( -\frac {n}{32}\mu (A)\Delta_l(x)^2\right)
		\frac{1}{\sqrt{\mu(A)} } \mu(dx)\\
		&\le 
		\frac{8}{\sqrt{n}}\sum_{A\in \P_h} \IND_{\{\mu(A)>h^{d_a}\}}\int_A\exp\left( -\frac{n}{32} \Delta_l(x)^2\mu (A)\right)\mu(dx)\frac{1}{\sqrt{\mu(A)}}\\
		&\quad +
		\frac{8}{\sqrt{n}}\sum_{A\in \P_h} \IND_{\{h^{d_a}\ge\mu_a(A)>0\}}\int_A\exp\left( -\frac {n}{32} \Delta_l(x)^2\mu_a (A)\right)\mu_a(dx)\frac{1}{\sqrt{\mu(A)}}\\
		& \quad +
		\frac{8}{\sqrt{n}}\sum_{A\in \P_h} \IND_{\{h^{d_a}\ge\mu_s(A)>0\}} \sqrt{\mu_s(A)},
	\end{align*}
	which is smaller than
	\begin{align*}
		& 
		\frac{8}{\sqrt{nh^{d_a}}}\int\exp\left( -\frac{nh^{d_a}}{32} \Delta_l(x)^2\right)\mu(dx)
        +
		\frac{8}{\sqrt{nh^{d_a}}}
		\int_{S_a}\exp\left( -\frac{nh^{d_a}}{32} \Delta_l(x)^2 f_h(x)\right)\frac{1}{\sqrt{f_h(x)}}\mu_a(dx),
	\end{align*}
	for $h$ small enough.
	If $l \ne g^*(x)$, then $\Delta_l(x)\ge P_{(1)}(x)-P_{(2)}(x)=:\Delta(x)$ otherwise $\Delta_l(x) = 0$, therefore up to a Berry-Esséen error term we have
	\begin{align*}
		J_{n,k,l}^{(1)}
		&\le 
		\frac{8}{\sqrt{nh^{d_a}}}\int\exp\left( -\frac{nh^{d_a}}{32} \Delta(x)^2\right)\mu(dx)\\
		&\quad +
		\frac{8}{\sqrt{nh^{d_a}}}
		\int_{S_a}\exp\left( -\frac{nh^{d_a}}{32} \Delta(x)^2 f_h(x)\right)
		\frac{1}{\sqrt{f_h(x)}} \mu_a(dx).
	\end{align*}
	and also
	\begin{align}
		\label{eq:multilabel-bound}
		\sum_{k=1}^M\sum_{l=1}^MJ_{n,k,l}^{(1)}
		&\le 
		\frac{8M^2}{\sqrt{nh^{d_a}}}\int\exp\left( -\frac{nh^{d_a}}{32} \Delta(x)^2\right)\mu(dx)\nonumber\\
		&\quad +
		\frac{8M^2}{\sqrt{nh^{d_a}}}
		\int_{S_a}\exp\left( -\frac{nh^{d_a}}{32} \Delta(x)^2 f_h(x)\right)
		\frac{1}{\sqrt{f_h(x)}} \mu_a(dx).
	\end{align}
	Similarly to (\ref{Part*}), the margin condition implies
	\begin{align}
		\label{kaa}
		&
		\frac{8M^2}{\sqrt{nh^{d_a}}}\int\exp\left( -\frac{nh^{d_a}}{32} (P_{(1)}(x)-P_{(2)}(x))^2\right)\mu(dx)\nonumber\\
		&=
		\frac{8M^2}{\sqrt{nh^{d_a}}} \int_0^{\infty}  e^{-as^2}G^*(ds)=
		O\left(M^2 (nh_n^{d_a})^{-(\gamma+1)/2}\right),
	\end{align}
	where $a=(nh^{d_a})/32$.
	As (\ref{part}), the combined margin and density condition yields that
	\begin{align}
		\label{ka}
		&
		\frac{8M^2}{\sqrt{nh^{d_a}}}
		\int_{S_a}\exp\left( -\frac{nh^{d_a}}{32} (P_{(1)}(x)-P_{(2)}(x))^2 f_h(x)\right)
		\frac{1}{\sqrt{f_h(x)}} \mu_a(dx)\nonumber\\
		&=
		\frac{8M^2}{\sqrt{nh^{d_a}}}\int_0^{\infty}  e^{-as^2}G_{h}(ds)=
		O\left( M^2(nh_n^{d_a})^{-(\gamma_1+1)/2}\right).
	\end{align}

	Thus,
	\begin{align}
		\label{MJn}
		\sum_{k=1}^M\sum_{l =1}^M J_{n,k,l}^{(1)}
		&=
		O\left( M^2(nh_n^{d_a})^{-(\gamma_1+1)/2}\right)
		+O\left( M^2 (nh_n^{d_a})^{-(\gamma+1)/2}\right) + O\bigg( \frac{M^2}{nh_n^{d_a}}\bigg) .
	\end{align}
	Concerning the approximation error $J_{n,j,l}^{(2)}$ (compare the proof of (\ref{IIn})), we have 
	\begin{align*}
		J_{n,k,l}^{(2)}
		&=
		\sum_j\int_{A_{h,j}}  \Delta_l(x)
		\IND_{\{|\EXP\{ P_{n,k}(x)\}-P_{k}(x)|\ge \Delta_l(x)/4\}}\mu(dx)\\
		&=
		\sum_j\int_{A_{h,j}} \Delta_l(x)\IND_{\{|\nu_k (A_{h,j})-\mu (A_{h,j})P_{k}(x)|
			\ge \mu (A_{h,j})\Delta_l(x)/4\}}|m(x)|\mu(dx).
	\end{align*}
	The Lipschitz condition implies that
	\begin{align*}
		&|\nu_k (A_{h,j})-\mu (A_{h,j})P_{k}(x)|
		\le
		\left|\int_{A_{h,j}} P_{k}(z) \mu(dz)-\mu (A_{h,j})P_{k}(x)\right|\\
		& \quad \le
		\int_{A_{h,j}}\left| P_{k}(z)-P_{k}(x) \right|\mu(dz)\le
		C\sqrt{d} h \mu (A_{h,j}),
	\end{align*}
	where $\nu_k(A_{h,j}) = \int_{A_{h,j}} P_k(z) \mu (dz)$.
	This together with the margin condition yields that
	\begin{align}
		\label{MIn}
		J_{n,k,l}^{(2)}
		&\le
		\sum_j\int_{A_{h,j}} \IND_{\{ C\sqrt{d}h\ge \Delta_l(x)/4\}}\Delta_l(x)\mu(dx)
		\le
		c^*(4C\sqrt{d}h)^{1+\gamma},
	\end{align}
	and so
	\begin{align}
		\label{MMIn}
		\sum_{k=1}^M\sum_{l =1}^M J_{n,k,l}^{(2)}
		&=
		O\left( M^2 h_n^{1+\gamma}\right).
	\end{align}
\end{proof}

\subsection{Proof of Lemma \ref{lSDA2}}\label{app:a4}

\begin{proof}
	One has that
	\begin{align*}
		\widetilde{G}_{h}(t)
		&=
		\int_{S_a}\IND_{\{0<{f_h(x)|m(x)|\le t\}}}\frac{1}{f_h(x)} \mu_a(dx)\\
		&\le
		\int_{S_a}\IND_{\{0<{\sqrt{f_{\epsilon,min}}\sqrt{f_h(x)}|m(x)|\le t\}}}\frac{1}{\sqrt{f_{\epsilon,min}}\sqrt{f_h(x)}} \mu_a(dx)
	\end{align*}
	By the definition of the combined margin and density condition,
	\begin{align*}
		&\int_{S_a}\IND_{\{0<{\sqrt{f_{\epsilon,min}}\sqrt{f_h(x)}|m(x)|\le t\}}}\frac{1}{\sqrt{f_{\epsilon,min}}\sqrt{f_h(x)}} \mu_a(dx) \\
        &\quad \le
		\int\IND_{\big\{0<\sqrt{f_h(x)}|m(x)|\le t/\sqrt{f_{\epsilon,min}}\big\}} \frac{1}{\sqrt{f_h(x)}}\mu(dx)/\sqrt{f_{\epsilon,min}}\\
		&\quad =
		G_h\left(t/\sqrt{f_{\epsilon,min}}\right)/\sqrt{f_{\epsilon,min}}\le c_1\left(t/\sqrt{f_{\epsilon,min}}\right)^{\gamma_1}/\sqrt{f_{\epsilon,min}}.
	\end{align*}
\end{proof}

\subsection{Proof of Lemma \ref{lSDA3}}\label{app:a5}

\begin{proof}
	One has that
	\begin{align*}
		G_{h}(t)
		&=
		\int_{S_a}\IND_{\{0<{\sqrt{f_h(x)}|m(x)|\le t\}}}\frac{1}{\sqrt{f_h(x)}} \mu_a(dx)\\
		&=
		\int_{\{f_h(x) \leq 1\} }\IND_{\{0<{\sqrt{f_h(x)}|m(x)|\le t\}}}\frac{1}{\sqrt{f_h(x)}}\mu_a(d x) +
		\int_{\{f_h(x) > 1\}}\IND_{\{0<{\sqrt{f_h(x)}|m(x)|\le t\}}}\frac{1}{\sqrt{f_h(x)}} \mu_a(dx)\\
		&\le
		\int_{\{f_h(x) \leq 1\} }\IND_{\{0<{f_h(x)|m(x)|\le t\}}}\frac{1}{f_h(x)}\mu_a(d x) +
		\int_{\{f_h(x) > 1\}}\IND_{\{0<{|m(x)|\le t\}}} \mu_a(dx)\leq c_2 t^{\gamma_2} + c^* t^\gamma.
	\end{align*}
    
    
\end{proof}

\subsection{Proof of Theorem \ref{thm:class}}\label{app:a6}

\begin{proof}
	Let $h= h_n$ be as above. For $x \in A_{h,j}$, we set
	\begin{align*}
		\mhat_n(x) = \frac{ \nutilde_n(A_{h,j})}{\mu(A_{h,j})},
	\end{align*}
	and
	\begin{align*}
		\mphat_n(x) = \frac{\frac{\sigma_Z}{n} \sum_{i=1}^n \epsilon_{i,j}}{\mu(A_{h,j})}.
	\end{align*}
	Then,
	\begin{align*}
		\mhat_n = \mphat_n + m_n,
	\end{align*}
	and
	\begin{align*}
		\widetilde D_n(x)= \mbox{sign }( \mhat_n(x)).
	\end{align*}
	Because of (\ref{g}), we have that
	\begin{align*}
		L(\widetilde D_n)-L^*
		&=
		\int \IND_{\{\mbox{sign }  (\mhat_n(x))\ne \mbox{sign } (m(x))\}}|m(x)|\mu(dx)\\
		&\le
		\int \IND_{\{|\mhat_n(x)-m(x)|\ge |m(x)|\}}|m(x)|\mu(dx)\\
		&\le
		I_n+J_n+K_n,
	\end{align*}
	where
	\begin{align*}
		I_{n}
		&=
		\int \IND_{\{|\EXP\{m_n(x)\}-m(x)|\ge |m(x)|/3 \}}|m(x)|\mu(dx),
	\end{align*}
	and
	\begin{align*}
		J_{n}
		&=
		\int \IND_{\{|m_n(x)-  \EXP\{m_n(x)\}|\ge |m(x)|/3 \}}|m(x)|\mu(dx),
	\end{align*}
	and
	\begin{align*}
		K_{n}
		&=
		\int \IND_{\{|\mphat_n(x)|\ge |m(x)|/3 \}}|m(x)|\mu(dx).
	\end{align*}
	As in the proof of Theorem \ref{thm:nonpriv}  one gets
	that
	\begin{align*}
		I_{n}+\EXP\{J_{n}\}
		&=
		O\left( h_n^{1+\gamma}\right)+O\left( \bigg(\frac{1}{nh_n^{d_a}}\bigg)^{(\min\{1,\gamma,\gamma_1)\}+1)/2}\right).
	\end{align*}
	For the term $K_n$,
	\begin{align*}
		\EXP\{K_{n}\}
		&=
		\int \PROB\left\{|\mphat_n(x)|\ge |m(x)|/3\right\}|m(x)|\mu(dx)\\
		&=
		\sum_j\int_{A_{h,j}} \PROB\left\{\left|\frac{\sigma_Z}{n} \sum_{i=1}^n \epsilon_{i,j} \right|
		\ge \mu (A_{h,j})|m(x)|/3\right\}|m(x)|\mu(dx).
	\end{align*}
	As above, the Berry-Esséen theorem yields
	\begin{align}
		\label{CLT-laplace-noise}
		&\PROB\left\{\left|\frac{\sigma_Z}{n} \sum_{i=1}^n \epsilon_{i,j} \right|
		\ge \mu (A_{h,j})|m(x)|/3\right\}\nonumber\\
		&=
		2\cdot\PROB\left\{\frac{1}{\sqrt{n}} \sum_{i=1}^n \epsilon_{i,j} 
		\ge \frac{\sqrt{n}\mu (A_{h,j})|m(x)|}{3\sigma_Z}\right\}\nonumber\\
		&\leq 2\Phi\bigg( -\frac{\sqrt{n}\mu (A_{h,j})|m(x)|}{3\sigma_Z} \bigg) + \frac{2c \varrho_\varepsilon}{\sigma_\varepsilon^3 \Big(1+\Big|\frac{\sqrt{n}\mu (A_{h,j})|m(x)|}{3\sigma_Z}\Big|^3\Big) \sqrt{n}}.
	\end{align}
	Similarly to \Eqref{eq:clt-delta} the error of CLT is essentially dominated by the first term in \Eqref{CLT-laplace-noise}, since
	\begin{align*}
		\Delta 
		&= 2\sum_j\int_{A_{h,j}}\frac{c \varrho_\varepsilon}{\sigma_\varepsilon^3 \Big(1+\Big|\frac{\sqrt{n}\mu (A_{h,j})|m(x)|}{3\sigma_Z}\Big|^3\Big)\sqrt{n}}|m(x)|\mu(dx)\\
		&\leq 6\max_z \frac{z}{1+ z^3}\sum_{j} \frac{c\varrho_\varepsilon \sigma_Z}{\sigma_\varepsilon^3 n}= \frac{\tilde{c} \sigma_Z}{nh^d}
	\end{align*}
	by the boundedness of $X$.
    Because of \Eqref{31*} and \Eqref{31**} this can be strenghened to $O(\sigma_Z/(nh^{d_a}))$.
    
	Using the decomposition of (\ref{decomp}), as in the proof of Theorem \ref{thm:nonpriv}, along with \Eqref{discr}, \Eqref{Part*} 
	and  \Eqref{part} one obtains
	\begin{align*}
		&2\sum_j\int_{A_{h,j}} \Phi\bigg( -\frac{\sqrt{n}\mu (A_{h,j})|m(x)|}{3\sigma_Z} \bigg) |m(x)|\mu(dx)\\
		&\leq 2\sum_j\int_{A_{h,j}} \frac{3\sigma_Z}{\sqrt{n}\mu (A_{h,j})|m(x)|}\exp\bigg( -\frac{n\mu (A_{h,j})^2m(x)^2}{18\sigma_Z^2} \bigg) |m(x)|\mu(dx).\\
		&\le 
		2\sum_{j} \IND_{\{h^{d_a}< \mu(A_{h,j})\}}  \int_{A_{h,j}} \frac{3}{\sqrt{n\mu(A_{h,j})^{2}/\sigma_Z^2}}\exp\left( -\frac{n \mu(A_{h,j})^{2}m(x)^2}{18\sigma_Z^2}\right)\mu(dx)\\
		& \quad +
		2\sum_{j} \IND_{\{h^{d_a}\ge\mu(A_{h,j})>0\}}  \int_{A_{h,j}} \frac{3}{f_h(x)\sqrt{nh^{2d_a}/\sigma_Z^2}}\exp\left( -\frac{n f_h(x)^2h^{2d_a}m(x)^2}{18\sigma_Z^2}\right)\mu_a(dx)\\
		&\le 
		2 \int \frac{3}{\sqrt{nh^{2d_a}/\sigma_Z^2}}\exp\left( -\frac{n h^{2d_a}m(x)^2}{18\sigma_Z^2}\right)\mu(dx)\\
		& \quad +
		2\int \frac{3}{f_h(x)\sqrt{nh^{2d_a}/\sigma_Z^2}}\exp\left( -\frac{n f_h(x)^2h^{2d_a}m(x)^2}{18\sigma_Z^2}\right)\mu_a(dx)
	\end{align*}
	for $h$ small enough.
	By the margin condition
	\begin{align*}
		\int \frac{3}{\sqrt{nh^{2d_a}/\sigma_Z^2}}\exp\left( -\frac{n h^{2d_a}m(x)^2}{18\sigma_Z^2}\right)\mu(dx)
		&=
		O\left(\left( \frac{\sigma_Z^2}{nh_n^{2d_a} }\right)^{(\gamma+1)/2}\right).
	\end{align*}
	and by the modified combined margin and density condition
	\begin{align*}
		&\int \frac{3}{f_h(x)\sqrt{nh^{2d_a}/\sigma_Z^2}}\exp\left( -\frac{n f_h(x)^2h^{2d_a}m(x)^2}{18\sigma_Z^2}\right)\mu_a(dx) =
		O\left(\left( \frac{\sigma_Z^2}{nh_n^{2d_a} }\right)^{(\gamma_2+1)/2}\right).
	\end{align*}
	In conclusion, we have
	\begin{align*}
		\EXP(L (\widetilde g_n) ) - L^*&= O\left(h^{1+\gamma}\right) + O\left( \bigg(\frac{1}{nh_n^{d_a}}\bigg)^{(\min\{1,\gamma,\gamma_1)\}+1)/2}\right)\\
        &\quad + O\left(\left( \frac{\sigma_Z^2}{nh^{2d_a}}\right)^{(\gamma+1)/2}\right)
		+ O\left(\left( \frac{\sigma_Z^2}{nh^{2d_a}}\right)^{(\gamma_2+1)/2}\right)
		+ O\left( \frac{\sigma_Z}{nh^{d_a}}\right).
	\end{align*}
\end{proof}

\subsection{Proof of Theorem \ref{thm:multi_label_class,priv}}\label{app:a7}

\begin{proof}
	Again, we bound \Eqref{eq:term} by
	\begin{align*}
		J_{n,k,l}
		\leq J_{n,k,l}^{(1)} + J_{n,k,l}^{(2)}+  J_{n, k, l}^{(3)},
	\end{align*}
	where
	\begin{align*}
		J_{n,k,l}^{(1)}
		&=\int \Delta_l(x)
		\PROB\{|P_{n,k}(x)-\EXP\{ P_{n,k}(x)\}|\ge \Delta_l(x)/6\}\marg(dx),
	\end{align*}
	and
	\begin{align*}
		J_{n,k,l}^{(2)}
		&=\int \Delta_l(x)
		\IND_{\{|\EXP\{ P_{n,k}(x)\}-P_{k}(x)|\ge \Delta_l(x)/6\}}\marg(dx),
	\end{align*}
	and
	\begin{align*}
		J_{n, k, l}^{(3)}
		&=\int \Delta_l(x)
		\PROB\{ |Q_{n,k}(x)|\ge \Delta_l(x)/6\}\marg(dx),
	\end{align*}
	with
	\begin{align*}
		Q_{n,k}(x)=\frac{\frac{\sigma_Z}{n}\sum_{i=1}^n \epsilon_{i,j,k}}{\mu (A_{h,j})},\quad \mbox{ if } x\in A_{h,j}.
	\end{align*}
	As in (\ref{MJn}) and (\ref{MMIn}), up to a Berry-Esséen bound
	\begin{align*}
		\sum_{k=1}^M\sum_{l =1}^M J_{n,k,l}^{(1)}
		&=
		O\left( M^2(nh_n^{d_a})^{-(\gamma_1+1)/2}\right)
		+O\left(M^2 (nh_n^{d_a})^{-(\gamma+1)/2}\right),
	\end{align*}
	and
	\begin{align*}
		\sum_{k=1}^M\sum_{l =1}^M J_{n,k,l}^{(2)}
		&=
		O\left( M^2 h_n^{1+\gamma}\right).
	\end{align*}
	
	Again, the CLT yields that
	\begin{align*}
		& \sum_{j} \int_{A_{h,j}}\Delta_l(x)
		\PROB\left\{\left|\frac{\sigma_Z}{n} \sum_{i=1}^n \epsilon_{i,j,k} \right|
		\ge \mu (A_{h,j})\Delta_l(x)/6\right\} \mu(dx)\\
		&\le \sum_{j} \int_{A_{h,j}}
		\frac{6\Delta_l(x)}{\sqrt{n}\mu (A_{h,j})\Delta_l(x)}\exp\left( -\frac{n\mu (A_{h,j})^2\Delta_l(x)^2/6^2 }{4\sigma_Z^2 }\right)\mu(dx)
		+ O\left( \frac{\sigma_Z}{nh_n^{d_a}}\right)\\
		&\le \sum_{j} \int_{A_{h,j}}
		\frac{6}{\sqrt{n}\mu (A_{h,j})}\exp\left( -\frac{n\mu (A_{h,j})^2\Delta(x)^2 }{144\sigma_Z^2 }\right)\mu(dx)
		+ O\left( \frac{\sigma_Z}{nh_n^{d_a}}\right).
	\end{align*}
	For $h$ small enough,  this together with the arguments in (\ref{discr}), (\ref{Part*}), (\ref{part}),  \Eqref{eq:multilabel-bound} and \Eqref{kaa} imply 
	\begin{align*}
		J_{n,k,l}^{(3)}
		&\le
		2\sum_j\IND_{\{ h^{d_a} < \mu(A_{h,j}) \}}  \int_{A_{h,j}} \frac{6}{\sqrt{n}\mu (A_{h,j})}\exp\left( -\frac{n\mu (A_{h,j})^2\Delta(x)^2 }{144\sigma_Z^2 }\right)\marg(dx)\\
		&\quad +
		2\sum_j\IND_{\{ h^{d_a} \geq \mu(A_{h,j}) > 0 \}} \int_{A_{h,j}} \frac{6}{\sqrt{n}\mu (A_{h,j})}\exp\left( -\frac{n\mu_a (A_{h,j})^2\Delta(x)^2 }{144\sigma_Z^2 }\right)\marg_a(dx)
	\end{align*}
    up to an $O\left( \frac{\sigma_Z}{nh^{d_a}}\right)$ term, and henceforth
    \begin{align*}
        &\sum_{k=1}^M \sum_{l = 1}^M J_{n,k,l}^{(3)} = O\bigg( M^2\bigg( \frac{\sigma_Z^2 }{nh_n^{2d_a}}\bigg)^{(1+\min(\gamma,\gamma_2))/2}\bigg)
		+ O\left( \frac{M^2\sigma_Z}{nh_n^{d_a}}\right).
    \end{align*}
\end{proof}

\end{document}